\newtheorem{theorem}{Theorem}
\newtheorem{property}{Property}
\newtheorem{corollary}{Corollary}
\newtheorem{lemma}{Lemma}
\newtheorem*{lemma*}{Lemma}
\renewcommand{\implies}{\Rightarrow}
\begin{document}
\twocolumn[
\icmltitle{On the Analysis of Trajectories of Gradient Descent in the Optimization of Deep Neural Networks}

\begin{icmlauthorlist}
\icmlauthor{Adepu Ravi Sankar}{aff1}
\icmlauthor{Vishwak Srinivasan}{aff1}
\icmlauthor{Vineeth N Balasubramanian}{aff1}
\end{icmlauthorlist}

\icmlaffiliation{aff1}{Department of Computer Science and Engineering, Indian Institute of Technology Hyderabad, Kandi, Telangana, India}

\icmlcorrespondingauthor{Adepu Ravi Sankar}{cs14resch11001@iith.ac.in}
\icmlcorrespondingauthor{Vishwak Srinivasan}{cs15btech11043@iith.ac.in}
\icmlcorrespondingauthor{Vineeth N Balasubramanian}{vineethnb@iith.ac.in}

\icmlkeywords{Deep Learning, Optimization, Gradient Descent, Stochastic Gradient Descent, Noise}
\vskip 0.3in
]

\printAffiliationsAndNotice{}

\begin{abstract}
Theoretical analysis of the error landscape of deep neural networks has garnered significant interest in recent years. In this work, we theoretically study the importance of noise in the trajectories of gradient descent towards optimal solutions in multi-layer neural networks. We show that adding noise (in different ways) to a neural network while training increases the rank of the product of weight matrices of a multi-layer linear neural network. We thus study how adding noise can assist reaching a global optimum when the product matrix is full-rank (under certain conditions established by \cite{DBLP:journals/corr/YunSJ17}). We establish theoretical foundations between the noise induced into the neural network - either to the gradient, to the architecture, or to the input/output to a neural network - and the rank of product of weight matrices. We corroborate our theoretical findings with empirical results.
\end{abstract}

\section{Introduction}
\label{sec1}
Deep neural network models are able to achieve state-of-the-art results on many real-world problems such as face recognition, speech recognition, and sentiment analysis. The recent empirical success of deep neural network models has evinced attempts by researchers to more closely understand the error landscape of deep neural networks, and analyze how a non-convex setting could consistently result in solutions of high application value. 

The line of research that has had a good impact in understanding the landscape of deep learning is by Choromanska \textit{et al.}\cite{pmlr-v40-Choromanska15}, where the authors study the error surfaces of deep neural nets under seven assumptions and analyzed them using the Hamiltonian of the spherical spin-glass model. This work leaves several open problems, an important one of which is to see if it is possible to drop some of the proposed assumptions and extend the error landscape analysis of neural networks. Kawaguchi addressed the proposed open problem in a seminal work \cite{kawaguchi2016deep} , where it was proved that the loss surface of deep neural nets are non-convex and non-concave; that all local minima are global minima; and that all other critical points are saddle points. This work was more recently extended by \cite{DBLP:journals/corr/YunSJ17} where the authors presented the necessary and sufficient conditions for the critical points to be global minima for deep linear networks. It was proposed that under a few assumptions, the rank of product of weight matrices at a global optimum is full-rank. We focus this effort on studying noise in neural networks, especially how the induced noise can help increase the rank of product of weight matrices. We show that all methods that involve noise, be it in the gradient during training, architecture of network, or added to input/output, attempt to increase the rank of the product of weight matrices as the optimization task progresses towards reaching a global minimum. The analysis of a linear network may look trivial at first sight, but even its loss function is non-convex in nature and, only recently have theoretical results started emerging for such networks. To the best of our knowledge, this is the first such effort where a unifying explanation of all such methods involving noise in neural networks are provided, and a connection of training methods to the rank of weight matrices and global optimality is studied. 

The remainder of this paper is organized as follows. Sec \ref{sec2} presents the notations and summarizes the key contributions of the paper; Sec \ref{sec3} establishes the relationship of noisy/perturbed gradient descent to optimality; Sec \ref{sec4} presents the connection of stochastic gradient descent to global optimality; Sec \ref{sec5} discusses how noise in architecture, input and output can be viewed in the same way; Sec \ref{sec6} presents a few extensions of our results (including to deep neural networks); Sec \ref{sec7} validates our results using experimental results; and Sec \ref{sec8} presents our conclusions.
\section{Preliminaries and Contributions}
\label{sec2}
In this section, we summarize the notations used as well as the key contributions of our work.
\vspace{-6mm}
\paragraph{Preliminaries:}
We consider a linear neural network with $H-1$ hidden layers each of which have a width $d_1,\cdots, d_{H - 1}$ respectively. The size of input and output layers are $d_x ,d_y$ respectively. Note that $d_x = d_0, d_y = d_H$. The hidden layer weights between layer $i-1$ and $i$ are given as $ W_i \in \mathbb{R}^{d_i \times d_{i-1}}$ for $i=1,\cdots, H$. The training data to the network are the input-output matrices $(X,Y)$, where $X \in \mathbb{R}^{d_x \times m}$, $Y \in \mathbb{R}^{d_y \times m}$ and $m$ is the total number of training samples. The loss function under consideration, as in \cite{kawaguchi2016deep,DBLP:journals/corr/YunSJ17}, is the squared loss error: $\mathcal{L}(W)= \frac{1}{2}|| W_{H}W_{H-1} \ldots W_1 X - Y||_F^2$. We define a ball centered at $c$ with radius $r$ as $\mathbb{B}_{F}(c,r)$, where $F$ is the Frobenius norm. 
\vspace{-6mm}
\paragraph{Our Key Contributions:} In this work, we theoretically analyze the influence of noise during training of neural networks. In particular, we show that \textit{perturbed gradient descent}, which adds noise to the gradient while training, increases the rank of the product matrix $\mathfrak{R} = W_{H}W_{H-1}\ldots W_1$. We then extend this analysis for other settings where noise is involved for neural networks. The key contributions of our work can be summarized as follows: we show that (i) for linear neural networks with \(H=2\), \textit{perturbed gradient descent} follows a trajectory that maintains a non-decreasing rank for $\mathfrak{R}$; (ii) the same results hold while training the network using Stochastic Gradient Descent (SGD); (iii) noise induced into the architecture as well as input/output also leads to a non-decreasing rank trajectory on $\mathfrak{R}$. We empirically validate our theoretical results by showing that using \textit{perturbed gradient descent} gradually increases the rank of the product of weight matrices, eventually reaching full rank and provide an extension to deep linear neural networks of our result under certain conditions, as well as empirically show that our result holds for deep networks.

\section{Noise in Batch Gradient Descent}
\label{sec3}
In this section, we show that for any two-layer linear neural network, training using perturbed gradient descent will increase the rank (rather, not decrease the rank)\footnote{For convenience, we use increasing and non-decreasing interchangeably in this work. We also use noise and perturbation interchangeably in this work.} of $\mathfrak{R}$ (product of weight matrices) in each iteration. Algorithm \ref{alg1} proposed by \cite{pmlr-v70-jin17a} is called \textit{perturbed gradient descent} as it adds a small amount of noise at every iteration to the calculated gradient.

\begin{algorithm}[H]
   \caption{Perturbed Gradient Descent \cite{pmlr-v70-jin17a}}
   \label{alg1}
\begin{algorithmic}
   \REQUIRE Initial weights $W^{0}_{i}, i = \{1, \ldots, H\}$
   \REQUIRE Learning rate $\eta$
   \REQUIRE Loss Function \(\mathcal{L} : X \times Y \to \mathbb{R}\)
   \STATE $t \leftarrow 0$ \hfill $\triangleright$ {\small Initialize time step}
   \REPEAT
   \FOR { $i \in \{1, \ldots, H \}$} 
   \STATE $\Delta W_i^{t} = \frac{\partial \mathcal{L}}{\partial W_i^{t}}$ \hfill $\triangleright$ {\small Gradient calculation}
    \STATE $W_i^{t+1} = W_i^t - \eta (\Delta W_i^{t} + \epsilon)$, \quad $\epsilon \sim \mathcal{N}(0,\sigma^2)$
   \ENDFOR
   \STATE \(t \leftarrow t + 1\)
   \UNTIL{convergence}
\end{algorithmic}
\end{algorithm}
\vspace{-10pt}

\begin{theorem}
\label{th:main}
Consider a $H$-layer linear neural network, trained using Algorithm \ref{alg1}, then $rank(\mathfrak{R}^t) \leq rank(\mathfrak{R}^{t+1})$, where $t$ is the current iterate, $\mathfrak{R}^t = W_H^t W_{H - 1}^t\ldots W_2^t W_1^t$ and $H=2$.
\end{theorem}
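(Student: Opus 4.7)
The approach is to show that after one step of Algorithm~\ref{alg1} the product $\mathfrak{R}^{t+1}=W_2^{t+1}W_1^{t+1}$ attains its maximum possible rank $\min(d_y,d_1,d_x)$ almost surely, from which $\mathrm{rank}(\mathfrak{R}^t)\leq\mathrm{rank}(\mathfrak{R}^{t+1})$ is immediate because $\mathrm{rank}(\mathfrak{R}^t)$ is bounded above by that same quantity. The whole argument is a measure-theoretic one that leverages the Gaussian noise injected in each update.

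First I would condition on the filtration $\mathcal{F}_t=\sigma(W_1^t,W_2^t)$ and decompose the updates as $W_i^{t+1}=M_i^{t+1}-\eta\epsilon_i$, where $M_i^{t+1}=W_i^t-\eta\,\Delta W_i^t$ is $\mathcal{F}_t$-measurable (hence effectively deterministic under the conditioning) and $\epsilon_1,\epsilon_2$ are independent matrices with i.i.d.\ $\mathcal{N}(0,\sigma^2)$ entries. Under this conditioning, $W_1^{t+1}$ and $W_2^{t+1}$ are independent Gaussian random matrices whose laws are absolutely continuous with respect to Lebesgue measure on $\mathbb{R}^{d_1\times d_x}$ and $\mathbb{R}^{d_y\times d_1}$, respectively, with arbitrary means.

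The central technical fact I would invoke is the standard observation that the set of rank-deficient matrices in $\mathbb{R}^{p\times q}$ is a proper Zariski-closed subvariety, defined by the simultaneous vanishing of all $\min(p,q)\times\min(p,q)$ minors. Any such proper subvariety has Lebesgue measure zero, and translating it by a fixed matrix does not change this. Applying it to $W_1^{t+1}$ and $W_2^{t+1}$ yields that, almost surely, each $W_i^{t+1}$ has rank $\min(d_i,d_{i-1})$.

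The step I expect to be the main obstacle, and the one worth spelling out carefully, is lifting full-rank of the individual factors to full-rank of the product. One needs that the set
\[
S=\bigl\{(A,B)\in\mathbb{R}^{d_y\times d_1}\times\mathbb{R}^{d_1\times d_x}:\mathrm{rank}(AB)<\min(d_y,d_1,d_x)\bigr\}
\]
is a proper algebraic subvariety of the ambient product space; this is true because the entries of the maximal minors of $AB$ are polynomials in $(A,B)$ that do not all vanish identically (one can exhibit an explicit pair achieving the maximal rank, e.g.\ appropriate rectangular identity matrices). Since the conditional joint law of $(W_2^{t+1},W_1^{t+1})$ is a product of two absolutely continuous measures, Fubini implies $\Pr(S\mid\mathcal{F}_t)=0$. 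Hence $\mathrm{rank}(\mathfrak{R}^{t+1})=\min(d_y,d_1,d_x)$ almost surely, and since $\mathrm{rank}(\mathfrak{R}^t)\leq\min(d_y,d_1,d_x)$ deterministically, the desired inequality follows by integrating out the conditioning.
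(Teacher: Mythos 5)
Your proof is correct, but it takes a genuinely different route from the paper. The paper argues via two ingredients: Lemma \ref{lemma1}, which exhibits a specific SVD-structured perturbation $\hat{A}=U\Sigma_{r+1}V^{T}$ that raises the rank of a factor by one while staying $\epsilon$-close in spectral norm, and Lemma \ref{rank-lemma} with its corollaries, which show that under side conditions on $r_{A}$ and $r_{B}$ an increase in the rank of a factor cannot decrease the rank of the product; the theorem is then assembled as a proof sketch from these pieces. You instead make a genericity argument: conditionally on the past, each $W_{i}^{t+1}$ is a nondegenerate Gaussian matrix, the pair $(W_{2}^{t+1},W_{1}^{t+1})$ has an absolutely continuous joint law, and the locus where $\mathrm{rank}(W_{2}^{t+1}W_{1}^{t+1})<\min(d_y,d_1,d_x)$ is a proper algebraic subvariety, hence null; so $\mathfrak{R}^{t+1}$ is of maximal rank almost surely and the inequality is immediate since $\mathrm{rank}(\mathfrak{R}^{t})$ is bounded by the same quantity. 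Your argument has the advantage of working directly with the Gaussian noise that Algorithm \ref{alg1} actually injects (the paper's Lemma \ref{lemma1} analyzes a hand-picked perturbation rather than the sampled one, and the hypotheses of Lemma \ref{rank-lemma} are not verified along the trajectory), and it is fully rigorous modulo the unavoidable ``almost surely'' qualifier. What it gives up is the incremental picture the paper is after: your conclusion is that the exact rank saturates at $\min(d_y,d_1,d_x)$ after a single noisy step, which makes monotonicity vacuous and does not by itself explain the gradual growth of \emph{numerical} rank seen in the experiments, whereas the paper's machinery (however incompletely applied) is built to track one-rank-at-a-time increases.
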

\paragraph{Proof Sketch:}
We use Lemmas \ref{lemma1} and \ref{rank-lemma} as the key steps towards our proof. We first show that we can increase the rank of a matrix by adding a small perturbation to the matrix in Lemma \ref{lemma1}. In Lemma \ref{rank-lemma}, we show that under certain conditions, if the rank of the individual weight matrices increase, then the rank of the product of weight matrices is also non-decreasing.

\begin{lemma}
\label{lemma1}
Given a matrix $A \in \mathbb{R}^{n \times m}$ with $\text{rank}(A)=r < \min\{ n,m \}$ and $A = U \Sigma V^T$ (Singular Value Decomposition of A), the matrix $\hat{A} = U \Sigma_{r+1} V^T$, where $\Sigma_{r+1} = \text{diag} \{ \sigma_1,  \sigma_2, \cdots,  \sigma_r,\epsilon,0,\cdots,0 \}$ has rank $r+1$ for all $\epsilon > 0$, and $||A-\hat{A}||_2 = \epsilon$.
\end{lemma}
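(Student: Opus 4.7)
The plan is to exploit the invariance of rank and spectral norm under multiplication by orthogonal matrices, since in the SVD both $U \in \mathbb{R}^{n \times n}$ and $V \in \mathbb{R}^{m \times m}$ are orthogonal and hence invertible. Because $\text{rank}(A)=r$, the diagonal matrix $\Sigma$ has exactly $r$ nonzero singular values $\sigma_1 \geq \cdots \geq \sigma_r > 0$ followed by zeros, and the hypothesis $r < \min\{n,m\}$ guarantees there is a genuine $(r+1)$-th diagonal slot available to be filled with $\epsilon$.

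First I would argue the rank claim. Writing $\hat A = U \Sigma_{r+1} V^T$ with $U,V$ orthogonal, I would invoke the standard fact that $\text{rank}(UMV^T)=\text{rank}(M)$ for any matrix $M$ when $U,V$ are nonsingular. Then I would just count the nonzero diagonal entries of $\Sigma_{r+1}=\text{diag}\{\sigma_1,\ldots,\sigma_r,\epsilon,0,\ldots,0\}$: for every $\epsilon>0$ there are exactly $r+1$ nonzero entries, so $\text{rank}(\Sigma_{r+1})=r+1$, and therefore $\text{rank}(\hat A)=r+1$.

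Next I would handle the spectral-norm claim by forming the difference directly:
\[
A-\hat A \;=\; U(\Sigma-\Sigma_{r+1})V^T,
\]
and observing that $\Sigma-\Sigma_{r+1}$ is a diagonal matrix whose only nonzero entry is $-\epsilon$ in position $(r+1,r+1)$. Since the spectral norm of a diagonal matrix equals the maximum modulus of its diagonal entries, $\|\Sigma-\Sigma_{r+1}\|_2=\epsilon$. Using unitary invariance of the spectral norm (i.e.\ $\|UMV^T\|_2=\|M\|_2$ for orthogonal $U,V$), I conclude $\|A-\hat A\|_2=\epsilon$.

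Honestly, I do not anticipate a real obstacle here: the statement is essentially a one-line consequence of the SVD together with the orthogonal-invariance of rank and of the spectral norm. The only subtle point worth flagging explicitly in the write-up is the hypothesis $r<\min\{n,m\}$, which ensures the $(r+1)$-th diagonal position of $\Sigma$ actually exists so that $\Sigma_{r+1}$ is well-defined as a same-shape diagonal matrix; without this, the ``$r+1$'' perturbation would not fit inside $\Sigma$ and the construction would be vacuous.
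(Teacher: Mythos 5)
Your proof is correct and follows essentially the same route as the paper's: both form $A-\hat A = U(\Sigma-\Sigma_{r+1})V^T$ and use orthogonal invariance of rank and of the spectral norm. Your write-up is simply more explicit than the paper's (which asserts the rank claim without spelling out the rank-invariance step), so there is nothing to change.
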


Lemma \ref{lemma1} shows that matrix $A$ can be approximated by a high-rank matrix $\hat{A}$, by making perturbation to the $r+1^{th}$ entry in the $\Sigma$ matrix of the SVD decomposition of $A$, such that $||A-\hat{A}||_2 = \epsilon$. The proof of Lemma \ref{lemma1} and all subsequent lemmata are deferred to Appendix \ref{app-exp}.

\begin{lemma}
\label{angle-lemma}
In Lemma \ref{lemma1}, $\cos(A, \hat{A}) > 0$.
\end{lemma}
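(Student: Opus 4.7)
The plan is to unfold the definition of $\cos(A,\hat{A})$ as the normalized Frobenius inner product, $\cos(A,\hat{A}) = \langle A,\hat{A}\rangle_F / (\|A\|_F\|\hat{A}\|_F)$, and exploit the SVD structure provided by Lemma \ref{lemma1}. Since $A$ is rank $r \ge 1$ (otherwise the hypothesis $r<\min\{n,m\}$ would still allow $r=0$, but then $\cos$ is undefined; I would handle this degenerate case separately by assuming $A\neq 0$, which is the only regime in which the cosine is meaningful), both denominators are strictly positive, so the sign of $\cos(A,\hat{A})$ is determined by the sign of the numerator.

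Next I would compute $\langle A,\hat{A}\rangle_F = \mathrm{tr}(A^T\hat{A})$ directly from the SVD. Substituting $A=U\Sigma V^T$ and $\hat{A}=U\Sigma_{r+1}V^T$ and using the orthogonality of $U$ and $V$ (so that $U^TU=I$ and the trace's cyclic property collapses $V^T V$), the inner product reduces to $\mathrm{tr}(\Sigma^T\Sigma_{r+1})$. Because $\Sigma$ and $\Sigma_{r+1}$ are both diagonal and agree in the first $r$ entries $\sigma_1,\ldots,\sigma_r$, while $\Sigma$ is zero on the $(r+1)$th diagonal entry, the $\epsilon$-perturbation contributes nothing to the trace, and we obtain
\[
\langle A,\hat{A}\rangle_F \;=\; \sum_{i=1}^{r}\sigma_i^2.
\]

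Since $\sigma_1,\ldots,\sigma_r>0$ by the assumption $\mathrm{rank}(A)=r$, the numerator is strictly positive, and hence $\cos(A,\hat{A})>0$. The whole argument is essentially a direct computation; I do not expect a genuine obstacle, the only delicate points being (a) making explicit which convention of cosine between matrices is used, and (b) ensuring that $\|A\|_F>0$ so that the quotient is well-defined, which is guaranteed by $r\ge 1$.
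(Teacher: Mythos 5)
Your proposal is correct and follows essentially the same route as the paper: both reduce $\mathrm{tr}(A^T\hat{A})$ via the shared SVD factors to $\sum_{i=1}^r \sigma_i^2 > 0$ and observe that the Frobenius norms in the denominator are positive. Your extra remark about the degenerate case $r=0$ is a minor point of care the paper omits, but it does not change the argument.
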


In other words, the high-rank approximation of matrix $A$ obtained using Lemma \ref{lemma1} makes an acute angle with the original matrix, thus allowing us to use this approximation for gradient descent while training.

We now proceed to show that the rank of the product of the rank-increased weight matrices also increases under certain conditions. Let $r_{Z}$ denote the rank of a matrix $Z$ for convenience.

\begin{lemma}
\label{rank-lemma}
Consider two matrices \(A\in \mathbb{R}^{m \times n}\), \(B \in \mathbb{R}^{n \times p}\), and a third matrix \(\hat{B}\in \mathbb{R}^{n \times p}\) such that \(r_{\hat{B}} = r_{B} + k\), $k \geq 0$. Then, given $r_A \geq n-k$, $r_B = n-k$, \(r_{A\hat{B}} \geq r_{AB}\).
\end{lemma}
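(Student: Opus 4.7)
The plan is to derive the result from two classical rank inequalities: Sylvester's rank inequality for a lower bound on $r_{A\hat{B}}$, and the standard $r_{AB}\leq \min(r_A,r_B)$ bound for an upper bound on $r_{AB}$. Since the hypotheses give $r_{\hat B}=n$ (i.e., $\hat B$ has full row rank) and $r_A\geq n-k=r_B$, these two bounds will meet at $n-k$ and deliver the claim.

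First I would unpack the hypothesis $r_{\hat B}=r_B+k=n$. Because $\hat B\in\mathbb{R}^{n\times p}$, this says $\hat B$ has full row rank $n$, so multiplication by $\hat B$ on the right preserves rank for any matrix $A$ with the appropriate column dimension. Concretely, applying Sylvester's rank inequality
\[
r_{A\hat B}\;\geq\;r_A+r_{\hat B}-n\;=\;r_A+n-n\;=\;r_A,
\]
and then invoking the assumption $r_A\geq n-k$, yields the lower bound $r_{A\hat B}\geq n-k$.

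Second I would bound $r_{AB}$ from above. The submultiplicativity of rank gives $r_{AB}\leq\min(r_A,r_B)\leq r_B=n-k$. Chaining the two bounds,
\[
r_{A\hat B}\;\geq\;n-k\;\geq\;r_{AB},
\]
which is exactly the statement of the lemma.

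There is no real obstacle here; the only subtlety is making sure the inequality direction in Sylvester's bound is applied with the correct ambient dimension (the shared dimension $n$, which coincides with the full-row-rank value for $\hat B$), so that the $-n$ term cancels the $r_{\hat B}=n$ contribution. The role of the condition $r_A\geq n-k$ is precisely to ensure $r_A\geq r_B$, so that the upper bound $\min(r_A,r_B)$ is controlled by $r_B$ rather than by $r_A$; without it, one could have $r_A<r_B=n-k$ and the argument would fail. I would close by remarking that the case $k=0$ is trivial (then $\hat B$ and $B$ have the same rank, and no improvement is required), so the content of the lemma is really the case $k\geq 1$, where the rank gain in $B$ is inherited by the product once $A$ is sufficiently non-degenerate.
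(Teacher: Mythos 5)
Your argument is correct and rests on the same two ingredients as the paper's proof --- Sylvester's rank inequality and the bound $r_{AB}\le\min(r_A,r_B)$ --- merely arranged as a direct chain through the common value $n-k$ instead of the paper's proof by contradiction with a case split on which of $r_A,r_B$ attains the minimum. One caveat on your closing remark: the hypothesis $r_A\ge n-k$ is not what saves the argument (your own bounds $r_{A\hat{B}}\ge r_A + r_{\hat{B}} - n = r_A$ and $r_{AB}\le\min(r_A,r_B)\le r_A$ already chain to give the claim with no assumption on $r_A$), so the condition is in fact superfluous once $r_{\hat{B}}=n$ is in hand.
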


\begin{corollary}
\label{cor-rank-1}
Consider two matrices \(A\in \mathbb{R}^{m \times n}\), \(B \in \mathbb{R}^{n \times p}\), and a third matrix \(\hat{A}\in \mathbb{R}^{m \times n}\) such that  \(r_{\hat{A}} = r_{A} + k\), $k \geq 0$. Then, given $r_A = n-k$, \(r_{\hat{A}B} \geq r_{AB}\).
\end{corollary}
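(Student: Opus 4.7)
The plan is to exploit the fact that the hypotheses $r_A = n-k$ and $r_{\hat{A}} = r_A + k$ together force $r_{\hat{A}} = n$, i.e., $\hat{A}$ has \emph{full column rank}. Once this is recognized, the inequality $r_{\hat{A}B} \geq r_{AB}$ collapses into two elementary facts about matrix rank, so the proof should be short.

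First I would establish the key observation that $\hat{A} \in \mathbb{R}^{m \times n}$ has rank exactly $n$. This means the linear map $x \mapsto \hat{A}x$ from $\mathbb{R}^n$ to $\mathbb{R}^m$ is injective: its kernel is trivial. Consequently, for any $x \in \mathbb{R}^p$,
\begin{equation*}
\hat{A}Bx = 0 \iff Bx = 0,
\end{equation*}
so $\ker(\hat{A}B) = \ker(B)$ when $B$ is viewed as a map $\mathbb{R}^p \to \mathbb{R}^n$. Applying the rank-nullity theorem to both sides yields $r_{\hat{A}B} = r_B$. In other words, multiplying $B$ on the left by a full-column-rank matrix leaves the rank of $B$ unchanged.

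Second, I would use the standard submultiplicativity of rank, $r_{AB} \leq \min(r_A, r_B)$, to deduce $r_{AB} \leq r_B$. Combining this with $r_{\hat{A}B} = r_B$ from the previous step gives the desired conclusion $r_{\hat{A}B} \geq r_{AB}$, completing the proof.

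An alternative route would be to transpose and reduce directly to Lemma \ref{rank-lemma} by identifying $B^T, A^T, \hat{A}^T$ with $A, B, \hat{B}$ of that lemma; this requires the side condition $r_{B^T} \geq n-k$, which is automatic since otherwise $r_{AB} \leq r_B < n-k$ and the claim is trivial from $r_{\hat{A}B} \geq 0$. I would mention this as a remark but favor the direct kernel argument above, since the only real content of the corollary is that perturbing $A$ up to full column rank cannot decrease the rank of the product. I do not anticipate a genuine obstacle; the one subtlety to be careful about is that the assumption $r_{\hat{A}} = r_A + k$ combined with $r_A = n-k$ pins $r_{\hat{A}}$ to exactly $n$ (rather than allowing a weaker inequality), and it is this equality that makes the injectivity argument go through.
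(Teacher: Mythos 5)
Your main argument is correct, and it takes a genuinely different route from the paper's. The paper gives no separate proof of this corollary: it is meant to follow by the same template as Lemma \ref{rank-lemma}, i.e.\ assume $r_{\hat{A}B} < r_{AB}$, chain Sylvester's inequality with the product rank bound to get $r_{\hat{A}} + r_B - n \leq r_{\hat{A}B} < r_{AB} \leq \min\{r_A, r_B\}$, substitute $r_{\hat{A}} = r_A + k$ and $r_A = n-k$, and derive the contradiction $r_B < r_B$. You instead observe that the hypotheses pin $r_{\hat{A}} = n$, so $\hat{A}$ has full column rank, $\ker(\hat{A}B) = \ker(B)$, and hence $r_{\hat{A}B} = r_B \geq r_{AB}$. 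Your argument is more direct, avoids the contradiction setup, and actually proves the stronger equality $r_{\hat{A}B} = r_B$, which exposes the real content of the corollary (perturbing $A$ up to full column rank cannot lose rank in the product); the paper's route has the advantage of being uniform across the lemma and both corollaries. One small caveat: in your aside about reducing to Lemma \ref{rank-lemma} by transposition, the claim that the missing side condition $r_B \geq n-k$ is harmless ``since otherwise $r_{AB} \leq r_B < n-k$ and the claim is trivial from $r_{\hat{A}B} \geq 0$'' does not hold --- $r_{AB} < n-k$ does not by itself give $r_{\hat{A}B} \geq r_{AB}$. That case is instead covered by your kernel argument, so the main proof is unaffected, but the remark as stated is not a valid fallback.
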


\begin{corollary}
\label{cor-rank-2}
Consider two matrices \(A\in \mathbb{R}^{m \times n}\), \(B \in \mathbb{R}^{n \times p}\), and two more matrices \(\hat{A}\in \mathbb{R}^{m \times n}\),\(\hat{B} \in \mathbb{R}^{n \times p}\) such that  \(r_{\hat{A}} = r_{A} + k\), \(r_{\hat{B}} = r_{B} + k\), and $k \geq 0$. Then, given $r_A + r_B = n-2k$, \(r_{\hat{A}\hat{B}} \geq r_{AB}\).
\end{corollary}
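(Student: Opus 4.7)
The natural plan is to reduce Corollary~\ref{cor-rank-2} to a sequential application of its two predecessors by upgrading $A \to \hat A$ and $B \to \hat B$ one at a time. Concretely, introduce the intermediate product $\hat A B$ and argue the chain
\[
r_{AB} \leq r_{\hat A B} \leq r_{\hat A \hat B},
\]
where the left inequality is Corollary~\ref{cor-rank-1} applied to the triple $(A, \hat A, B)$, and the right inequality is Lemma~\ref{rank-lemma} applied to the triple $(\hat A, B, \hat B)$ (with the roles of ``$A$'' and ``$B$'' in that lemma played by $\hat A$ and $B$, respectively). Composing them immediately gives the desired $r_{\hat A \hat B} \geq r_{AB}$.

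For this chain to go through I must verify the rank hypotheses that each predecessor demands. Corollary~\ref{cor-rank-1} requires $r_A = n-k$; Lemma~\ref{rank-lemma} (used in the second step) requires $r_{\hat A} \geq n-k$ and $r_B = n-k$. The inequality $r_{\hat A} \geq n-k$ is automatic once $r_A = n-k$ is in force, since then $r_{\hat A} = r_A + k = n \geq n-k$. So the whole plan really rests on deducing the two equalities $r_A = n-k$ and $r_B = n-k$ from the sum hypothesis $r_A + r_B = n-2k$ of Corollary~\ref{cor-rank-2}.

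The main obstacle is exactly this rank bookkeeping: forcing both $r_A = n-k$ and $r_B = n-k$ yields $r_A + r_B = 2n-2k$, which does not square with the stated $r_A + r_B = n-2k$ unless the parameter $k$ is reinterpreted separately in the two invocations (e.g.\ allowing asymmetric rank increments $k_A$, $k_B$ per matrix and equating only their global sum). A cleaner fallback that side-steps this altogether is to use the explicit SVD perturbation structure of Lemma~\ref{lemma1}: write $A = U_A \Sigma_A V_A^T$ and $\hat A = U_A \tilde\Sigma_A V_A^T$, sharing singular bases with $k$ extra small singular values, and likewise $B = U_B \Sigma_B V_B^T$, $\hat B = U_B \tilde\Sigma_B V_B^T$. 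Let $M = V_A^T U_B$, an $n \times n$ orthogonal matrix. Then $r_{AB}$ and $r_{\hat A \hat B}$ both equal the rank of a top-left submatrix of $M$---of size $r_A \times r_B$ for $AB$ and $(r_A+k) \times (r_B+k)$ for $\hat A \hat B$---and since the larger submatrix contains the smaller one, its rank is non-decreasing. On this route the sum condition $r_A + r_B = n-2k$ has a clean geometric role: it ensures $r_A+k \leq n$ and $r_B+k \leq n$, so the extended submatrix still fits inside $M$.
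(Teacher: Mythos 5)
Your diagnosis of the chaining approach is correct and is the important observation here: forcing $r_A = n-k$ (for Corollary~\ref{cor-rank-1}) and $r_B = n-k$ (for Lemma~\ref{rank-lemma}) yields $r_A + r_B = 2n-2k$, which is incompatible with the stated hypothesis $r_A + r_B = n-2k$ except in degenerate cases, so the corollary cannot be obtained by composing its two predecessors. The paper gives no separate proof of this corollary; the evident intent is to rerun the Sylvester-inequality contradiction used for Lemma~\ref{rank-lemma}. But that route also stalls: assuming $r_{\hat A\hat B} < r_{AB}$ gives only $r_{\hat A} + r_{\hat B} - n = (r_A + r_B + 2k) - n = 0 < \min\{r_A, r_B\}$, which is no contradiction whenever both matrices are nonzero. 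Your SVD fallback is therefore a genuinely different argument from the paper's, and of the two it is the only one that can be completed: identifying $r_{AB}$ and $r_{\hat A\hat B}$ with the ranks of nested top-left submatrices of the orthogonal matrix $M = V_A^T U_B$ gives the monotonicity immediately, and it makes the role of $r_A + r_B = n-2k$ transparent (it merely guarantees $r_A + k \le n$ and $r_B + k \le n$).

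The caveat you should state explicitly is that your SVD argument uses strictly more than the corollary's written hypotheses: it requires $\hat A$ (resp.\ $\hat B$) to share singular vectors with $A$ (resp.\ $B$), as in the construction of Lemma~\ref{lemma1}. Under only the rank conditions $r_{\hat A} = r_A + k$, $r_{\hat B} = r_B + k$, the statement is false: take $n = m = p = 4$, $k = 1$, $A = B = e_1 e_1^T$ (so $r_A + r_B = 2 = n - 2k$), $\hat A = e_1 e_1^T + e_2 e_2^T$, $\hat B = e_3 e_3^T + e_4 e_4^T$; then $r_{AB} = 1$ but $\hat A \hat B = 0$. So your proof establishes the corollary in the setting where the paper actually applies it (perturbations produced by Lemma~\ref{lemma1}), but not the statement as literally written, and you should either add the shared-singular-basis assumption to the statement or flag that the rank hypotheses alone do not suffice.
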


We now see how increasing the rank of the product of weight matrices can help the training algorithm reach a global optimum. In recent work \cite{DBLP:journals/corr/YunSJ17}, Yun \textit{et al.} related the rank of product of weight matrices and global optimality and presented Theorem \ref{th1} (below), which gave the necessary and sufficient conditions of a critical point of a deep linear network to be a global minimum. The set of global minima is provided by partitioning the set of weight matrices based on the rank of the product of weight matrices of a neural network. The result holds true only under the following set of assumptions: (i) $\min \{d_{x}, d_{1}, d_{2}, \cdots, d_{H-1}, d_{y}\} = \min \{d_{x}, d_{y}\}$; (ii) $d_{x}, d_{y} \leq m$; (iii) $XX^{T}$ and $YX^{T}$ are full rank; (iv) singular values of $YX^{T}(XX^{T})^{-1}X$ are distinct.
\begin{theorem}
{\normalfont \citep[Thm 2.1]{DBLP:journals/corr/YunSJ17}}\hspace{2mm}
\label{th1}
If $k=\min\{ d_x, d_y\}$, define the following set
$\mathcal{V} = \{ (W_1,W_2,\cdots,W_{H}): rank(W_{H} \cdots W_2 W_1)=k\}$. Then every critical point in $\mathcal{V}$ is a global minimum and every other critical point in $\mathcal{V}^c$ is a saddle point.
\end{theorem}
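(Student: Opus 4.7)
The plan is to analyze the two subclasses of critical points separately after writing down the first-order conditions. Letting $P_i = W_H W_{H-1} \cdots W_{i+1}$ and $Q_i = W_{i-1} \cdots W_1$ (with the convention $P_H = Q_1 = I$), a direct computation gives
$$\nabla_{W_i}\mathcal{L} = P_i^{T}(\mathfrak{R}X - Y)X^{T} Q_i^{T},$$
so every critical point satisfies $P_i^{T}(\mathfrak{R}X - Y)X^{T} Q_i^{T} = 0$ for all $i = 1, \ldots, H$. All subsequent analysis is driven by these $H$ equations together with the structural assumptions (i)--(iv).

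\paragraph{Case $\mathcal{V}$ (full-rank critical points are global minima).}
Suppose $\text{rank}(\mathfrak{R}) = k$. Assumption (i) forces every intermediate width to be at least $k$, and full-rankness of $\mathfrak{R}$ then forces every $P_i$ and $Q_i$ to have rank $k$. Specializing the critical-point equation to $i = H$ gives $(\mathfrak{R}X - Y)X^{T} Q_H^{T} = 0$; when $d_y \ge d_x$, $Q_H^{T}$ has full row rank and admits a right inverse, and since $XX^{T}$ is invertible by (iii) I can peel off both flanking factors to conclude $\mathfrak{R} = YX^{T}(XX^{T})^{-1}$. When $d_y \le d_x$, a symmetric argument using the gradient at $i = 1$ (where $P_1^{T}$ has full row rank) yields the same identity. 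This matrix is precisely the unconstrained least-squares minimizer of $M \mapsto \tfrac{1}{2}\|MX - Y\|_F^{2}$, so the loss at any such critical point coincides with the global minimum of $\mathcal{L}$.

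\paragraph{Case $\mathcal{V}^c$ (rank-deficient critical points are saddles).}
I must produce both a strict descent direction and a strict ascent direction near the critical point. The ascent direction is easy: scale any $W_i$ along a direction not annihilated by $P_i$ and $Q_i$, and the loss grows to leading order once the linear term vanishes. The descent direction is the delicate step. With $r = \text{rank}(\mathfrak{R}) < k$, I invoke Lemma \ref{lemma1} to construct a rank-$(r+1)$ perturbation $\widehat{\mathfrak{R}}$ of $\mathfrak{R}$ at spectral distance $\epsilon$, and Lemma \ref{angle-lemma} to ensure it points into an acute cone around $\mathfrak{R}$. I then use Corollary \ref{cor-rank-1} (or its dual) to realize this bump by perturbing a \emph{single} weight matrix $W_i$ along an SVD-based direction compatible with the flanking products $P_i,Q_i$. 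Expanding $\mathcal{L}$ along this curve and choosing the added rank-one factor to be of the form $u v^{T}$ with $u$ orthogonal to the column space of $\mathfrak{R}$ and $v$ aligned with the leading uncaptured singular direction of the residual $Y - \mathfrak{R}X$, assumption (iv) guarantees that this direction is unique (no spectral coincidences) and that the resulting directional derivative of the loss is strictly negative. Together with the ascent direction above, this shows the critical point is neither a local minimum nor a local maximum, hence a saddle.

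\paragraph{Main obstacle.}
The hard part is the strict descent claim in $\mathcal{V}^c$: rank-raising $\mathfrak{R}$ by itself is not enough; the new singular direction must actually reduce $\|\mathfrak{R}X - Y\|_F^{2}$. This requires (a) using assumption (iv) to pin down a specific bump direction that strictly correlates with an uncaptured component of the residual, and (b) realizing that bump through a perturbation of a single weight matrix via the rank-propagation lemmas \ref{rank-lemma}--\ref{cor-rank-2}, while tracking spectral-versus-Frobenius norm conversions carefully enough to expose a strictly negative first-order term. Coordinating these two ingredients --- picking a rank-raising direction that genuinely lowers the loss and factoring it consistently through the product parameterization --- is the technical core of the argument.
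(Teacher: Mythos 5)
First, a framing point: the paper does not prove this statement at all --- Theorem \ref{th1} is quoted verbatim from \citet[Thm 2.1]{DBLP:journals/corr/YunSJ17} and used as an imported black box, so there is no in-paper proof to compare against. Judged on its own terms, your first half (critical points in $\mathcal{V}$ are global minima) is essentially the standard and correct argument: at a critical point with $\mathrm{rank}(\mathfrak{R})=k$ the flanking products $P_i,Q_i$ inherit rank $k$, the $i=H$ (or $i=1$) stationarity equation can be stripped of its full-row-rank flank and of the invertible $XX^T$, yielding $\mathfrak{R}=YX^T(XX^T)^{-1}$, which attains the unconstrained least-squares optimum and hence lower-bounds anything the network can achieve. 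That part is fine.

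The saddle half contains a genuine, fatal gap. You propose to "realize this bump by perturbing a \emph{single} weight matrix $W_i$" and to extract "a strictly negative first-order term." Neither can work. At a critical point every directional derivative is zero by definition, so there is no negative first-order term to be had. Worse, if you perturb only one factor, the induced change in the product is \emph{exactly} linear: $\mathfrak{R} \mapsto \mathfrak{R} + P_i\,\delta W_i\,Q_i$ with no higher-order terms, so the exact change in the loss is
\begin{equation*}
\Delta\mathcal{L} \;=\; \bigl\langle (\mathfrak{R}X-Y)X^T,\; P_i\,\delta W_i\,Q_i \bigr\rangle \;+\; \tfrac{1}{2}\bigl\| P_i\,\delta W_i\,Q_i\,X \bigr\|_F^2 \;=\; 0 \;+\; \tfrac{1}{2}\bigl\| P_i\,\delta W_i\,Q_i\,X \bigr\|_F^2 \;\geq\; 0,
\end{equation*}
the inner product vanishing because it equals $\langle \nabla_{W_i}\mathcal{L}, \delta W_i\rangle$. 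So single-matrix perturbations can never decrease the loss at a critical point; descent at a rank-deficient critical point must come from perturbing at least two factors $W_i, W_j$ simultaneously, so that the product acquires a second-order cross term $P\,\delta W_i \cdots \delta W_j\,Q$ whose correlation with the residual $\mathfrak{R}X - Y$ can be made strictly negative. Engineering that cross term --- including the degenerate subcases where the intermediate subproducts between $i$ and $j$ are themselves rank-deficient and one must go to even higher order --- is the actual technical content of the Kawaguchi/Yun et al. saddle argument, and it is precisely the step your sketch replaces with the rank-propagation lemmas \ref{lemma1}--\ref{rank-lemma}, which track only rank and say nothing about whether the loss goes down. The acute-angle fact of Lemma \ref{angle-lemma} does not rescue this: an acute angle between $\mathfrak{R}$ and $\widehat{\mathfrak{R}}$ has no bearing on the sign of the change in $\|\mathfrak{R}X-Y\|_F^2$.
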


\begin{theorem}
Under the conditions specified in Theorem \ref{th1}, perturbed gradient descent with $\text{rank}(\mathfrak{R}^t) \leq \text{rank}(\mathfrak{R}^{t+1})$ where $t$ is the current iterate is guaranteed to converge to a global minimum. 
\end{theorem}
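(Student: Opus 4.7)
The plan is to combine three ingredients: (i) the saddle-escape guarantee for perturbed gradient descent established by \cite{pmlr-v70-jin17a}, (ii) the rank-monotonicity along the trajectory from Theorem~\ref{th:main}, and (iii) the landscape characterization of \cite{DBLP:journals/corr/YunSJ17} recalled in Theorem~\ref{th1}. First I would argue that, under the assumptions of Theorem~\ref{th1}, the squared loss $\mathcal{L}(W) = \tfrac{1}{2}\|W_H \cdots W_1 X - Y\|_F^2$ satisfies the smoothness and strict-saddle-type regularity conditions required by Algorithm~\ref{alg1} (bounded gradients and Hessians on any sublevel set containing the iterates, and strict saddles at critical points outside $\mathcal{V}$ by Theorem~\ref{th1}). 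This justifies appealing to the main convergence result of \cite{pmlr-v70-jin17a}, which states that the iterates of perturbed gradient descent converge, with high probability, to a second-order stationary point of $\mathcal{L}$.

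Next I would argue by contradiction. Suppose the limit point $W^{\ast} = (W_1^{\ast},\ldots,W_H^{\ast})$ produced by the algorithm lies in $\mathcal{V}^c$, that is, $\mathrm{rank}(\mathfrak{R}^{\ast}) < k$ where $k = \min\{d_x,d_y\}$. By Theorem~\ref{th1}, every critical point in $\mathcal{V}^c$ is a strict saddle. But the saddle-escape guarantee of \cite{pmlr-v70-jin17a} rules out convergence to such points with high probability, giving a contradiction. Hence $W^{\ast} \in \mathcal{V}$, and a second application of Theorem~\ref{th1} then identifies $W^{\ast}$ as a global minimum.

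The rank-monotonicity from Theorem~\ref{th:main} plays the complementary role of certifying that the trajectory actively migrates toward the set $\mathcal{V}$ rather than drifting in and out of it: since $\mathrm{rank}(\mathfrak{R}^{t+1}) \geq \mathrm{rank}(\mathfrak{R}^{t})$, once the iterates enter $\mathcal{V}$ they cannot leave, so a limit in $\mathcal{V}$ is consistent with and in fact implied by the combination of the saddle-escape dynamics and the non-decreasing rank. Concretely, Lemma~\ref{lemma1} (together with Lemma~\ref{angle-lemma}) ensures that each perturbation step is strictly rank-increasing whenever the current iterate has sub-maximal rank, so the rank must reach $k$ in finitely many iterations, after which all subsequent critical points encountered belong to $\mathcal{V}$.

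The main obstacle I anticipate is the first ingredient: verifying cleanly that the hypotheses of the saddle-escape theorem of \cite{pmlr-v70-jin17a} (Lipschitz gradient, Lipschitz Hessian, strict-saddle property with a uniform lower bound on the negative eigenvalue) hold uniformly along the PGD trajectory for the linear-network squared loss. The strict-saddle part follows from Theorem~\ref{th1}, but establishing the global smoothness bounds requires either restricting to a sublevel set or showing \emph{a priori} that the iterates remain bounded, which for unregularized deep linear networks is nontrivial. A secondary subtlety is reconciling the one-step rank argument of Theorem~\ref{th:main}, which uses the perturbed update only once, with the repeated-perturbation dynamics of Algorithm~\ref{alg1}; this is handled by noting that the rank inequality applies to every pair of consecutive iterates and hence composes.
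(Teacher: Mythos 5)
Your plan follows essentially the same route as the paper: combine the rank monotonicity of Theorem~\ref{th:main} with the landscape characterization of Theorem~\ref{th1}, and defer the actual convergence behaviour of perturbed gradient descent to \cite{pmlr-v70-jin17a}. Be aware, however, that the paper offers no proof beyond the two-sentence remark following the theorem statement, which asserts precisely this chain of reasoning without elaboration, so your writeup is considerably more detailed than the original --- and the obstacles you flag at the end are genuine and are not addressed anywhere in the paper: the smoothness and boundedness hypotheses needed to invoke the saddle-escape result of \cite{pmlr-v70-jin17a} are never verified, and Theorem~\ref{th1} classifies the critical points in $\mathcal{V}^c$ only as saddle points, not as \emph{strict} saddles with quantitative negative curvature, which is what that result requires (for $H \geq 3$ deep linear networks are known to admit degenerate saddles, so this is not cosmetic). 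One claim in your plan also goes beyond what the paper's results support: you assert via Lemma~\ref{lemma1} that each perturbation step is strictly rank-increasing whenever the rank is sub-maximal, so that the rank reaches $k$ in finitely many steps; but Lemma~\ref{lemma1} constructs a specific SVD-based perturbation of a single matrix, whereas Algorithm~\ref{alg1} adds isotropic Gaussian noise to the gradient, and the paper never bridges this mismatch (moreover Theorem~\ref{th:main} gives only non-strict monotonicity and only for $H=2$). In short, your proposal matches the paper's approach and is more rigorous than the paper's own justification, but neither your plan nor the paper constitutes a complete proof of the stated convergence guarantee.
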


The above theorem states that a critical point is a global minimum if the rank of the product of weight matrices is full rank. Hence, \textit{perturbed gradient descent} (Alg \ref{alg1}), while increasing the rank using induced noise, is guaranteed to reach a global minimum when the product matrix $\mathfrak{R}$ reaches full rank under the abovementioned set of assumptions. We note that the convergence analysis of \textit{perturbed gradient descent} is presented in \cite{pmlr-v70-jin17a} and does not affect our analysis.

\section{Noise via Stochastic Gradient Descent}
\label{sec4}
This section shows the equivalence between Stochastic Gradient Descent (SGD) and \textit{perturbed gradient descent}. By establishing the said equivalence, we hypothesize that SGD can also be viewed as increasing the rank of the product matrix, $\mathfrak{R}$, in each iteration. We assume the bounded variance property between batch and stochastic gradients in Property \ref{bvp}, as in  \citep[A1]{allen2017natasha}.
\begin{property}
\label{bvp}
(Bounded Variance Property) Given the full batch gradient $G$ and the stochastic gradient $g$ (of a mini-batch), $\mathbb{E}[|| g -  G ||]_2^2 \leq \gamma $ for some $\gamma > 0$, where \(||.||_{2}\) denotes the \(L_{2}\)-norm, and the expectation is taken over the mini-batches.
\end{property}

\begin{lemma}
\label{equi-lemma}
Under the assumption of property \ref{bvp}, with probability at least \(1 - \delta\), the following holds:
\begin{equation}
    \|\hat{g} - g\|_{2}    \leq {\Large \frac{(\sqrt{d\sigma^{2} + \gamma})}{\delta}}
\end{equation}
where \(g\) is the stochastic gradient (for a mini-batch) and \(\hat{g}\) is the perturbed full-batch gradient, given by $\hat{g} = G + \mathcal{N}(0, \sigma^2)$ (i.e. noise is sampled from zero-centered Gaussian with finite variance) and \(d\) is the dimension of $g$.
\end{lemma}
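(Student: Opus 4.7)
The plan is to control $\|\hat g - g\|_2$ by first bounding its second moment in expectation, then converting that moment bound into a high-probability bound via Markov's inequality (applied to the norm, not the squared norm, which is what produces the factor $1/\delta$ rather than $1/\sqrt{\delta}$ in the stated result).

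First I would write the difference in the obvious additive form, $\hat g - g = (G - g) + \epsilon$, where $\epsilon \sim \mathcal{N}(0,\sigma^{2} I_{d})$ is the injected noise and $(G-g)$ is the mini-batch stochastic noise. Taking squared norms and expanding,
\begin{equation*}
\mathbb{E}\bigl[\|\hat g - g\|_{2}^{2}\bigr]
= \mathbb{E}\bigl[\|G-g\|_{2}^{2}\bigr] + 2\,\mathbb{E}\bigl[\langle G-g,\epsilon\rangle\bigr] + \mathbb{E}\bigl[\|\epsilon\|_{2}^{2}\bigr].
\end{equation*}
Since $\epsilon$ is drawn independently of the mini-batch and has zero mean, the cross term vanishes. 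The first term is at most $\gamma$ by Property \ref{bvp}, and the Gaussian term satisfies $\mathbb{E}[\|\epsilon\|_{2}^{2}] = d\sigma^{2}$ because the $d$ coordinates of $\epsilon$ are i.i.d.\ $\mathcal{N}(0,\sigma^{2})$. Combining gives $\mathbb{E}[\|\hat g - g\|_{2}^{2}] \le d\sigma^{2} + \gamma$.

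Next I would pass to the norm itself via Jensen's inequality, $\mathbb{E}[\|\hat g - g\|_{2}] \le \sqrt{\mathbb{E}[\|\hat g - g\|_{2}^{2}]} \le \sqrt{d\sigma^{2} + \gamma}$. Finally, applying Markov's inequality to the nonnegative random variable $\|\hat g - g\|_{2}$,
\begin{equation*}
\Pr\!\left[\|\hat g - g\|_{2} \ge \frac{\sqrt{d\sigma^{2}+\gamma}}{\delta}\right]
\le \frac{\mathbb{E}[\|\hat g - g\|_{2}]}{\sqrt{d\sigma^{2}+\gamma}/\delta}
\le \delta,
\end{equation*}
and taking the complementary event yields the claimed inequality with probability at least $1-\delta$.

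The main subtlety, and the step most deserving of care, is the vanishing of the cross term: this relies both on $\epsilon$ being independent of the mini-batch draw and on the mini-batch being sampled so that $\mathbb{E}[g] = G$, so that $\mathbb{E}[G - g] = 0$. The first is built into how the perturbed gradient is constructed, and the second is the standard unbiasedness assumption implicit in Property \ref{bvp}. Beyond this, everything else is routine: Jensen's inequality and Markov's inequality on the norm give exactly the factor $1/\delta$ appearing in the statement (using Markov on $\|\hat g - g\|_{2}^{2}$ would instead yield $\sqrt{(d\sigma^{2}+\gamma)/\delta}$, so the choice of which inequality to invoke is what matches the constants in the lemma).
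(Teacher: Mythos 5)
Your proof is correct and follows essentially the same route as the paper's: both reduce to the second-moment bound $\mathbb{E}[\|\hat g - g\|_2^2] \le d\sigma^2 + \gamma$, then apply Jensen's inequality and Markov's inequality on the norm to obtain the $1/\delta$ factor. The only cosmetic difference is that you expand $\|(G-g)+\epsilon\|_2^2$ directly, whereas the paper expands $\|\hat g - G\|_2^2$ and solves for $\mathbb{E}[\|\hat g - g\|_2^2]$; the cross term vanishes in both for the same reason (zero-mean noise independent of the mini-batch).
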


From the above result, we have shown an equivalence between the stochastic gradient and full batch gradient with noise \textit{i.e.,} perturbed gradient, and thereby, the connection between SGD and global optimality as in Section \ref{sec3}.

\section{Noise in Architecture}
\label{sec5} 

In this section, we analyze the usefulness of adding noise in different ways to the neural network architecture. We once again show that noise in architecture essentially helps the optimization algorithm increase the rank of the weight matrices. In particular, we study the addition of noise to the input/output, as well as a popular method: Dropout \cite{Srivastava:2014:DSW:2627435.2670313}.
\subsection{Effect of Noise in Input/Output}
We first show that adding noise to input helps increase the rank of the product matrix $\mathfrak{R}$, by establishing an equivalence with perturbed gradient descent. Let us define \( \mathfrak{R}_\ell^i = W_{i+1}^T \cdots W_H^T \) and \( \mathfrak{R}_r^i = W_1^T \cdots W_{i-1}^T \); we already know \( \mathfrak{R} = W_H W_{H-1} \cdots W_1\).

\begin{lemma}
\label{noiselemma}
A deep linear network trained with noise added to input, $X+\epsilon$, is equivalent to training with \textit{perturbed gradient descent} (Algorithm 1) where noise is a function of the weights and $\epsilon$ is given by $\varphi \left(W_{H}, W_{H-1}, \ldots, W_{1}, X;\epsilon \right) =  \mathfrak{R}_\ell^i [ \mathfrak{R} \epsilon X^T + \mathfrak{R} X \epsilon^T + R \epsilon \epsilon^T - Y\epsilon^T] \mathfrak{R}_r^i$.
\end{lemma}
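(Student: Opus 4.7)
The plan is to write down the input-perturbed training loss, differentiate it with respect to each weight matrix $W_i$, and read off the extra terms as the perturbation $\varphi$ stated in the lemma. Concretely, let
\[
\widetilde{\mathcal{L}}(W) \;=\; \tfrac{1}{2}\bigl\|\mathfrak{R}(X+\epsilon) - Y\bigr\|_{F}^{2},
\]
and use the standard matrix-calculus identity that, for any matrix $Z$ held fixed, $\partial\bigl(\tfrac12\|\mathfrak{R}Z - Y\|_{F}^{2}\bigr)/\partial W_{i} = \mathfrak{R}_{\ell}^{i}(\mathfrak{R}Z - Y)Z^{T}\mathfrak{R}_{r}^{i}$. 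This follows from applying the product rule to $\mathfrak{R} = W_{H}\cdots W_{i+1}\,W_{i}\,W_{i-1}\cdots W_{1}$; the transposed products on the left and right of $W_{i}$ are exactly what $\mathfrak{R}_{\ell}^{i}$ and $\mathfrak{R}_{r}^{i}$ were defined to denote. Setting $Z = X + \epsilon$ reduces everything that follows to algebra.

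Next I would expand $(\mathfrak{R}(X+\epsilon) - Y)(X+\epsilon)^{T}$ into its six elementary pieces and regroup them as
\[
(\mathfrak{R}X - Y)X^{T} \;+\; \bigl[\mathfrak{R}\epsilon X^{T} + \mathfrak{R}X\epsilon^{T} + \mathfrak{R}\epsilon\epsilon^{T} - Y\epsilon^{T}\bigr].
\]
Sandwiching this with $\mathfrak{R}_{\ell}^{i}$ on the left and $\mathfrak{R}_{r}^{i}$ on the right, the first summand reproduces $\partial\mathcal{L}/\partial W_{i}$ for the clean loss, while the bracketed cross-terms collect into exactly the expression $\varphi(W_{H},\ldots,W_{1},X;\epsilon)$ written in the statement. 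Hence $\partial\widetilde{\mathcal{L}}/\partial W_{i} = \partial\mathcal{L}/\partial W_{i} + \varphi$.

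Finally, I would draw the equivalence: a gradient-descent step on $\widetilde{\mathcal{L}}$ is $W_{i}^{t+1} = W_{i}^{t} - \eta\bigl(\partial\mathcal{L}/\partial W_{i} + \varphi\bigr)$, which is precisely the update in Algorithm~\ref{alg1} with the isotropic Gaussian perturbation replaced by the weight- and $\epsilon$-dependent term $\varphi$. This is the sense in which input noise is equivalent to perturbed gradient descent, and it is exactly the bridge needed to port the rank-monotonicity conclusion of Section~\ref{sec3} to this setting.

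The main obstacle is purely bookkeeping in the matrix derivative: keeping the transposes on the four cross-terms consistent, and making sure that the common outer sandwich $\mathfrak{R}_{\ell}^{i}(\cdot)\mathfrak{R}_{r}^{i}$ factors cleanly through both the $XX^{T}$-type and the $\epsilon\epsilon^{T}$-type contributions so that the residual assembles into $\varphi$ term-for-term. No probabilistic machinery is required, since the statement is a deterministic identity on gradients; once the chain rule is applied and the expansion is carried out, the claim reads off directly.
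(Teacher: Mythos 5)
Your proposal is correct and follows essentially the same route as the paper's proof: write the closed-form gradient $\mathfrak{R}_\ell^i(\mathfrak{R}Z - Y)Z^T\mathfrak{R}_r^i$, substitute $Z = X+\epsilon$, expand, and identify the leftover cross-terms $\mathfrak{R}_\ell^i[\mathfrak{R}\epsilon X^T + \mathfrak{R}X\epsilon^T + \mathfrak{R}\epsilon\epsilon^T - Y\epsilon^T]\mathfrak{R}_r^i$ as the perturbation $\varphi$. The only difference is presentational --- you make the regrouping of the six expansion terms and the final ``equivalence as an update rule'' step slightly more explicit than the paper does.
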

It is easy to see that  when noise is added to the output as $Y+\epsilon$ in Lemma \ref{noiselemma}, the gradient of the loss function is a \textit{perturbed gradient} again. 

\subsection{Effect of Dropout:} Dropout \cite{Srivastava:2014:DSW:2627435.2670313} is a technique which injects multiplicative noise in the activation of a neural network:
\begin{equation}
\label{dropout}
O_k = W_k(Z \odot B)
\end{equation}
where $O_k$ is the output after Dropout, $W_k$ are the weights in the layer, $Z$ is the input before Dropout, and $B$ follows the Bernoulli distribution $B \sim \operatorname{Bern} \left({1-p}\right)$, where $p$ is the probability of success. Changing the underlying distribution of Dropout to Gaussian has been suggested in \cite{Srivastava:2014:DSW:2627435.2670313} and has been shown to work well in practice. In Lemma \ref{dropout-lemma} below, we show that applying Gaussian Dropout is equivalent to adding noise to the input.

\begin{lemma}
\label{dropout-lemma}
Let $\mathbb{E}[\epsilon \epsilon^T] =  \beta \mathbb{I}$, where $\mathbb{I}$ is the identity matrix, and $\beta > 0$. Then, the loss function of a linear neural network with Gaussian Dropout $G \sim \mathcal{N}(1,\sigma^{2})$ is the same as a network with loss function where $\epsilon$ is added to the input.
\end{lemma}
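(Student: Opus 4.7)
The plan is to decompose the multiplicative Gaussian dropout mask into its deterministic mean plus a zero-mean Gaussian perturbation, which will mechanically convert the dropout operation into an additive input perturbation, and then verify that the hypothesis on the covariance matches what this induced additive perturbation actually has.

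Concretely, I would write $G = \mathbf{1} + \tilde{\epsilon}$ with $\tilde{\epsilon}$ drawn entrywise from $\mathcal{N}(0,\sigma^{2})$. Applied to the input, this gives $X \odot G = X + X \odot \tilde{\epsilon}$, so propagating through the linear network and squaring produces
\[
\tfrac{1}{2}\|\mathfrak{R}(X + \epsilon) - Y\|_{F}^{2}, \qquad \epsilon := X \odot \tilde{\epsilon},
\]
which is identical in form to the loss of a network trained with an additive input perturbation $\epsilon$. That observation already gives the equivalence at the level of the loss expression; what is left is to certify that $\epsilon$ carries the covariance that the lemma assumes.

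For the second-moment check I would compute
\[
\mathbb{E}[\epsilon\epsilon^{T}]_{ij} \;=\; \sum_{k} X_{ik}X_{jk}\,\mathbb{E}[\tilde{\epsilon}_{ik}\tilde{\epsilon}_{jk}],
\]
which by independence of the entries of $\tilde{\epsilon}$ vanishes off-diagonal and equals $\sigma^{2}\|x_{i}\|^{2}$ on the diagonal, where $x_{i}$ denotes the $i$-th row of $X$. The hypothesis $\mathbb{E}[\epsilon\epsilon^{T}] = \beta\mathbb{I}$ is thus recovered with $\beta = \sigma^{2}\|x_{i}\|^{2}$ under the mild assumption that the rows of $X$ share a common norm (standard input normalisation). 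Chaining with Lemma \ref{noiselemma} then places Gaussian dropout inside the perturbed-gradient-descent framework of Section \ref{sec3}, and so in the rank-non-decreasing regime of Theorem \ref{th:main}.

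The main obstacle is exactly this covariance identification: $\epsilon = X\odot\tilde{\epsilon}$ is data-dependent and its covariance is isotropic only when the rows of $X$ have equal norm. I expect the cleanest route is to absorb this into a preprocessing normalisation of $X$; a slightly weaker alternative is to restate the conclusion with a diagonal covariance in place of the scalar $\beta\mathbb{I}$, which preserves the argument since the trace identity $\mathbb{E}\|\mathfrak{R}\epsilon\|_{F}^{2} = \mathrm{tr}(\mathfrak{R}\,\mathbb{E}[\epsilon\epsilon^{T}]\mathfrak{R}^{T})$ and the vanishing of the cross term from $\mathbb{E}[\epsilon]=0$ are insensitive to the exact form of the diagonal. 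Once the covariance step is dispatched, the remainder of the proof is a direct substitution.
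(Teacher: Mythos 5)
Your argument is essentially correct for dropout applied at the input layer, but it takes a genuinely different route from the paper's. You prove a \emph{pathwise} identity: writing the mask as $G = \mathbf{1} + \tilde{\epsilon}$ turns $X \odot G$ into $X + X\odot\tilde{\epsilon}$, so the dropout loss literally \emph{is} an additive-input-noise loss for the data-dependent perturbation $\epsilon = X\odot\tilde{\epsilon}$, after which you check when the hypothesis $\mathbb{E}[\epsilon\epsilon^{T}]=\beta\mathbb{I}$ actually holds (only when the rows of $X$ share a common norm --- a caveat you correctly flag and patch either by normalising $X$ or by weakening to a diagonal covariance). The paper instead proves an \emph{in-expectation} equivalence: it applies the Gaussian mask to the second-layer weights, $(W_2\odot G)W_1X$, computes $\mathbb{E}_G[\|Y-(W_2\odot G)W_1X\|_F^2] = \|Y-W_2W_1X\|_F^2 + \sigma^{2}\|W_2W_1X\|_F^2$, separately computes the expected loss under right-multiplicative input noise $X(I+\epsilon)$ with $\mathbb{E}[\epsilon\epsilon^{T}]=\beta\mathbb{I}$, obtaining $\|Y-W_2W_1X\|_F^2+\beta\|W_2W_1X\|_F^2$, and identifies the two regularised objectives. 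Your version is stronger where it applies --- it is an exact identity of random losses rather than an equality of means, and your $\epsilon$ is added to $X$ in exactly the form required by Lemma~\ref{noiselemma}, whereas the paper's input-noise model in this proof is the right-multiplicative $X(I+\epsilon)$, which does not match that lemma.

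The one substantive limitation is scope: your reduction only covers the mask acting on the network input. When dropout is applied at a hidden layer --- which is how the paper's proof instantiates it --- the induced perturbation is $(W_1X)\odot\tilde{\epsilon}$ on the hidden activation, and this cannot in general be rewritten as $W_1\epsilon'$ for some input perturbation $\epsilon'$ unless $W_1$ admits a suitable (left-)inverse. So if the lemma is meant to cover dropout at an arbitrary layer, you either need the paper's expectation-level argument or an explicit surjectivity assumption on the preceding weight matrices; as written, your proof establishes the input-layer case only.
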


The aforementioned results show that noise introduced in the architecture, be it input/output or through (Gaussian) Dropout, is equivalent to \textit{perturbed gradient descent}, and thus increases the rank of the product of weight matrices, eventually helping reach one of the global minima.
\section{Extensions}
\label{sec6}
We now discuss possible extensions of the results presented in the work so far. In particular, we discuss the connection between \textit{perturbed gradient descent} and escaping saddle points during training.

\vspace{-5pt}
\paragraph{Escaping Saddle Points \textit{via} Rank Increase:} 
It is well-known that saddle points pose a significant problem \cite{Dauphin:2014:IAS:2969033.2969154} while training deep learning models. We present an alternate view for escaping saddle points through the proposed rank increase strategy. Let the highest attainable rank of $\mathfrak{R}$ (the product of weight matrices) be $r$. Lemma \ref{th:saddle} (informally) states our result.
\begin{lemma}
\label{th:saddle}
Perturbed gradient descent (Alg \ref{alg1}) escapes saddle points by increasing the rank of \( \mathfrak{R}\) under conditions specified in \citet[Thm~2.1]{DBLP:journals/corr/YunSJ17}.
\end{lemma}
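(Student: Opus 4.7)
The plan is to combine the rank-monotonicity of perturbed gradient descent (Theorem \ref{th:main}) with the characterization of critical points from Theorem \ref{th1}. Under the hypotheses of \citet[Thm~2.1]{DBLP:journals/corr/YunSJ17}, every critical point of \(\mathcal{L}\) is either a global minimum (when \(\mathrm{rank}(\mathfrak{R})=k=\min\{d_x,d_y\}\)) or a saddle point (when \(\mathrm{rank}(\mathfrak{R})<k\)). Suppose the iterate \((W_1^t,\dots,W_H^t)\) lies at, or arbitrarily close to, a saddle, so that \(r_t := \mathrm{rank}(\mathfrak{R}^t) < k\). I would argue that after the noise injection step of Algorithm \ref{alg1}, the new iterate satisfies \(\mathrm{rank}(\mathfrak{R}^{t+1}) > r_t\) with probability one, which implies that the algorithm leaves the saddle stratum.

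First, I would establish the strict rank-increase at a saddle. The update is \(W_i^{t+1} = W_i^t - \eta\,\Delta W_i^t - \eta\epsilon\) with \(\epsilon \sim \mathcal{N}(0,\sigma^{2})\). The set of matrices of rank at most \(s\) in \(\mathbb{R}^{d_i\times d_{i-1}}\) is an algebraic variety of strictly smaller dimension, hence has Lebesgue measure zero. Because the Gaussian perturbation is absolutely continuous with respect to Lebesgue measure, the perturbed iterate misses this variety almost surely, so \(\mathrm{rank}(W_i^{t+1}) \geq \mathrm{rank}(W_i^t)\) with strict inequality whenever \(W_i^t\) was rank-deficient. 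Lemma \ref{lemma1} makes this concrete: moving along the singular direction associated with a zero singular value of \(W_i^t\) yields a matrix of rank one higher, at Frobenius distance \(\epsilon\). Combining this with Lemma \ref{rank-lemma} (and Corollaries \ref{cor-rank-1}--\ref{cor-rank-2}) propagates the per-factor rank increase to the product, giving \(r_{t+1} > r_t\).

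Second, I would iterate: the rank sequence \(\{r_t\}\) is integer-valued, non-decreasing by Theorem \ref{th:main}, and bounded above by \(k\). Every visit to a saddle triggers a strict rank increase by the argument above, so after at most \(k - r_0\) saddle encounters the iterate satisfies \(\mathrm{rank}(\mathfrak{R}^T) = k\), i.e.\ \((W_1^T,\dots,W_H^T) \in \mathcal{V}\). By Theorem \ref{th1}, no critical point in \(\mathcal{V}\) is a saddle, so the algorithm has permanently exited the set of saddle points.

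The main obstacle is upgrading the non-decreasing rank statement of Theorem \ref{th:main} to a strictly increasing one at a saddle, because nothing in Algorithm \ref{alg1} syntactically forces the perturbation \(\epsilon\) to point in a rank-augmenting direction; the argument has to be probabilistic, using absolute continuity of the Gaussian plus the fact that the rank-\(r_t\) locus is measure zero. A secondary subtlety is satisfying the hypotheses of Lemma \ref{rank-lemma} (on the ranks \(r_A\), \(r_B\)) at every iterate so that a rank increase in one factor actually lifts the rank of the product; I would either restrict to the two-layer case of Theorem \ref{th:main} or additionally invoke the assumption \(\min\{d_x,d_1,\dots,d_{H-1},d_y\}=\min\{d_x,d_y\}\) from Theorem \ref{th1} to ensure that the intermediate factors can never be the rank bottleneck.
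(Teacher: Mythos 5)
Your proposal follows the same route as the paper's own proof sketch: use Theorem \ref{th1} to identify saddle points with rank-deficient $\mathfrak{R}$, argue that the noise injection of Algorithm \ref{alg1} raises the rank of $\mathfrak{R}$, and iterate until the full-rank stratum $\mathcal{V}$ is reached, at which point no critical point can be a saddle. Where you go beyond the paper is in confronting the step the paper elides: Theorem \ref{th:main} only gives $\mathrm{rank}(\mathfrak{R}^t)\leq\mathrm{rank}(\mathfrak{R}^{t+1})$, while escaping a saddle needs a \emph{strict} increase, and the paper's sketch simply asserts that ``we can increase the rank by adding a small perturbation'' (leaning on Lemma \ref{lemma1}, which constructs a particular rank-raising perturbation rather than analyzing the one the algorithm actually draws). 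Your measure-theoretic patch --- the rank-$\leq s$ locus is a proper subvariety of measure zero, and entrywise Gaussian noise is absolutely continuous, so the perturbed factor leaves it almost surely --- is the right way to close that gap, and in fact gives more than you claim: each perturbed $W_i^{t+1}$ is almost surely \emph{full} rank, so under assumption (i) of Theorem \ref{th1} the product reaches rank $k$ after a single noisy step, without needing the careful bookkeeping of Lemma \ref{rank-lemma}'s hypotheses across iterates. One caveat applies equally to your argument and the paper's: both establish that the iterate enters the set $\mathcal{V}$ where no \emph{critical point} is a saddle, but neither addresses the dynamical content of ``escaping'' (that the iterates do not subsequently return to, or hover near, the low-rank stratum, and that the loss decreases); since the lemma is stated informally, this is a limitation of the statement rather than a defect of your proof relative to the paper's.
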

The proof sketch for the above is straightforward from other results in this work. Assuming without loss of generality that \textit{perturbed gradient descent} is currently at a saddle point with $\text{rank}(\mathfrak{R}) = r-1$ (using necessary and sufficiency conditions stated in Thm \ref{th1}). Using Algorithm \ref{alg1}, we can increase the rank of $\mathfrak{R}$ by adding a small perturbation. When this is done sequentially and $\text{rank}(\mathfrak{R})$ is increased to the highest attainable rank $r$, by Theorem \ref{th1}, we know that we have reached a global minimum, thus escaping the saddle. 

\section{Experiments}
\label{sec7}
We conducted experiments to validate the claims in this paper, and the results (on linear, non-linear and deep networks) are presented in this section under the assumptions in Section \ref{sec3}. A synthetic dataset that ensures $XX^T$ and $XY^T$ are full rank (following the assumptions in \cite{DBLP:journals/corr/YunSJ17}) was used, with  input $X$ of 1000 dimensions, output $Y$ of 250 dimensions, and a total of $1000$ data points for training. The initial network architecture used was \texttt{1000 $\times$ 500 $\times$ 250}, and the results are shown in Figure \ref{rank-figure} for 50 epochs / full-batch iterations of training. We observed the rank of the product of weight matrices during training, while using gradient descent and \textit{perturbed gradient descent}. It can be clearly seen that when \textit{perturbed gradient descent} is used the product matrix attains a full rank of $250$, empirically validating our Theorem \ref{th:main}. This is, however, not the case with standard gradient descent (i.e., without noise). More results on other non-linear activation functions are deferred to Appendix \ref{app-exp}. 
\begin{figure}[H]
  \centering
\includegraphics[width=6cm, height=3cm]{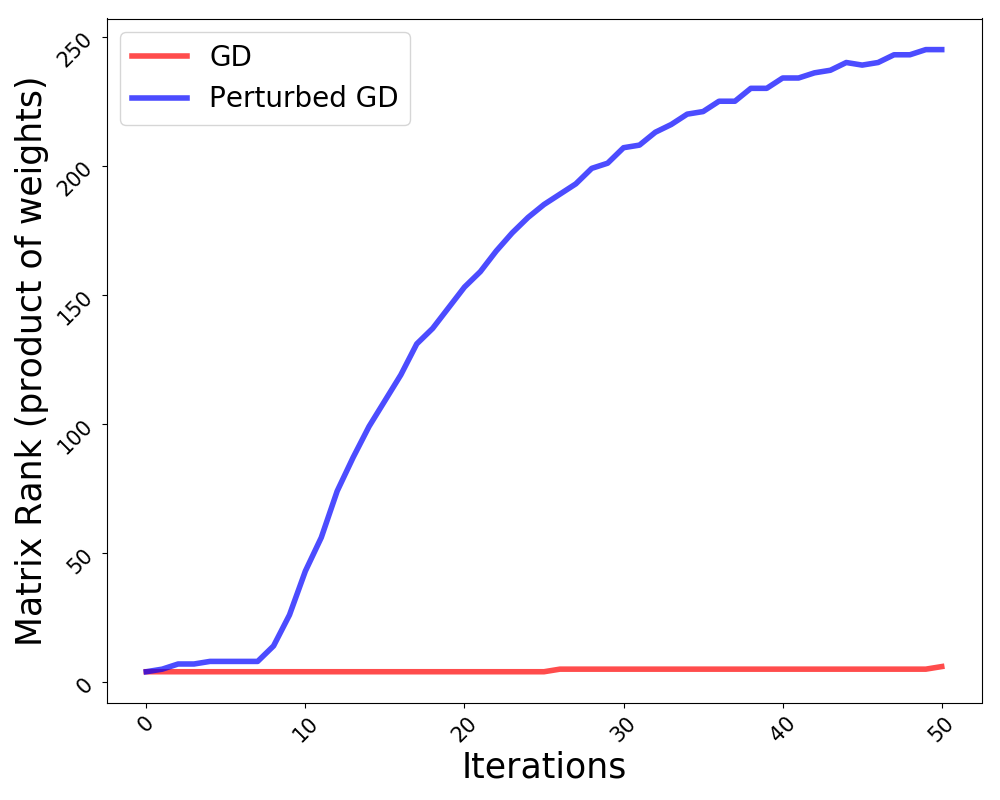}
  \caption{Rank of the product of weight matrices for the aforementioned architecture. Note that the rank of the product reaches 250, which is the highest possible in this scenario (where the dimensions of the matrix are \(250 \times 1000\)}
\label{rank-figure}
\end{figure}

\section{Conclusions}
\label{sec8}
In this work, we studied the importance of noise in the trajectories of training algorithm in linear neural networks. We analyzed noise in neural networks from different perspectives: gradient descent (including stochastic gradient descent), as well as architecture (including input/output, dropout). In all cases, we showed that noise helps increase the rank of the product of weight matrices in neural networks. We empirically evaluated our results on linear, non-linear and deep networks. We also discussed how the results in this work can be extended to deep networks under certain conditions and how under certain conditions, can ensure convergence to a global optimum. As future work, we plan to study the results while relaxing the assumptions, as well as extend the analysis to non-linear networks.

\bibliography{main}

\begin{thebibliography}{8}
\providecommand{\natexlab}[1]{#1}
\providecommand{\url}[1]{\texttt{#1}}
\expandafter\ifx\csname urlstyle\endcsname\relax
  \providecommand{\doi}[1]{doi: #1}\else
  \providecommand{\doi}{doi: \begingroup \urlstyle{rm}\Url}\fi

\bibitem[Allen{-}Zhu(2017)]{allen2017natasha}
Allen{-}Zhu, Zeyuan.
\newblock Natasha 2: Faster non-convex optimization than {SGD}.
\newblock \emph{CoRR}, abs/1708.08694, 2017.

\bibitem[Choromanska et~al.(2015)Choromanska, LeCun, and
  Arous]{pmlr-v40-Choromanska15}
Choromanska, Anna, LeCun, Yann, and Arous, Gérard~Ben.
\newblock Open problem: The landscape of the loss surfaces of multilayer
  networks.
\newblock In \emph{Proceedings of The 28th Conference on Learning Theory},
  2015.

\bibitem[Dauphin~et al(2014)]{Dauphin:2014:IAS:2969033.2969154}
Dauphin~et al, Yann~N.
\newblock Identifying and attacking the saddle point problem in
  high-dimensional non-convex optimization.
\newblock In \emph{Proceedings of the 27th International Conference on Neural
  Information Processing Systems}, 2014.

\bibitem[Ge et~al.(2015)Ge, Huang, Jin, and Yuan]{pmlr-v40-Ge15}
Ge, Rong, Huang, Furong, Jin, Chi, and Yuan, Yang.
\newblock Escaping from saddle points --- online stochastic gradient for tensor
  decomposition.
\newblock In \emph{Proceedings of The 28th Conference on Learning Theory},
  2015.

\bibitem[Jin et~al.(2017)Jin, Ge, Netrapalli, Kakade, and
  Jordan]{pmlr-v70-jin17a}
Jin, Chi, Ge, Rong, Netrapalli, Praneeth, Kakade, Sham~M., and Jordan,
  Michael~I.
\newblock How to escape saddle points efficiently.
\newblock In \emph{Proceedings of the 34th International Conference on Machine
  Learning}, 2017.

\bibitem[Kawaguchi(2016)]{kawaguchi2016deep}
Kawaguchi, Kenji.
\newblock Deep learning without poor local minima.
\newblock In \emph{Proceedings of Advances in Neural Information Processing
  Systems}, 2016.

\bibitem[Srivastava et~al.(2014)Srivastava, Hinton, Krizhevsky, Sutskever, and
  Salakhutdinov]{Srivastava:2014:DSW:2627435.2670313}
Srivastava, Nitish, Hinton, Geoffrey, Krizhevsky, Alex, Sutskever, Ilya, and
  Salakhutdinov, Ruslan.
\newblock Dropout: A simple way to prevent neural networks from overfitting.
\newblock \emph{J. Mach. Learn. Res.}, 2014.

\bibitem[Yun et~al.(2018)Yun, Sra, and Jadbabaie]{DBLP:journals/corr/YunSJ17}
Yun, Chulhee, Sra, Suvrit, and Jadbabaie, Ali.
\newblock Global optimality conditions for deep neural networks.
\newblock \emph{International Conference on Learning Representations}, 2018.

\end{thebibliography}
\bibliographystyle{icml2018}
\newpage
\appendix
\onecolumn

\section{Appendix}
\addcontentsline{toc}{section}{Appendices}
\label{appendix}

\paragraph{Proof of Lemma \ref{lemma1}:}
\begin{lemma*}
\label{lemma1-app}
Given a matrix $A \in \mathbb{R}^{n \times m}$ with $\text{rank}(A)=r < \min\{ n,m \}$ and $A = U \Sigma V^T$ (Singular Value Decomposition of A), the matrix $\hat{A} = U \Sigma_{r+1} V^T$, where $\Sigma_{r+1} = \text{diag} \{ \sigma_1,  \sigma_2, \cdots,  \sigma_r,\epsilon,0,\cdots,0 \}$ has rank $r+1$ for all $\epsilon > 0$, and $||A-\hat{A}||_2 = \epsilon$.
\end{lemma*}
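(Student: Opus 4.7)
The plan is to exploit two standard facts about the singular value decomposition: orthogonal matrices preserve both rank and spectral norm. Writing $A = U \Sigma V^{T}$ with $U, V$ orthogonal, I would first observe that for any diagonal matrix $D$ of the appropriate shape, $\operatorname{rank}(U D V^{T}) = \operatorname{rank}(D)$ and $\|U D V^{T}\|_{2} = \|D\|_{2}$, since multiplication by an orthogonal matrix is an isometry on $\mathbb{R}^{n}$ in the Euclidean norm. Both claims of the lemma then reduce to elementary computations on diagonal matrices.

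For the rank claim, I would note that by hypothesis the first $r$ diagonal entries $\sigma_{1}, \ldots, \sigma_{r}$ are strictly positive (they are the nonzero singular values of $A$), and the $(r+1)$-th entry of $\Sigma_{r+1}$ is $\epsilon > 0$, while the remaining entries are zero. Hence $\Sigma_{r+1}$ has exactly $r+1$ nonzero diagonal entries and its rank is $r+1$. Applying the rank-preservation property yields $\operatorname{rank}(\hat{A}) = \operatorname{rank}(\Sigma_{r+1}) = r+1$.

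For the norm claim, I would form the difference $A - \hat{A} = U(\Sigma - \Sigma_{r+1})V^{T}$ and note that $\Sigma - \Sigma_{r+1}$ is a diagonal matrix whose only nonzero entry is $-\epsilon$ in position $(r+1, r+1)$; all other diagonal slots cancel because $\Sigma$ and $\Sigma_{r+1}$ agree on the first $r$ entries and both vanish beyond position $r+1$. The spectral norm of a diagonal matrix is its largest diagonal entry in absolute value, so $\|\Sigma - \Sigma_{r+1}\|_{2} = \epsilon$. Invoking the isometry property of $U$ and $V$ one more time gives $\|A - \hat{A}\|_{2} = \epsilon$, completing the proof.

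There is essentially no obstacle here: the whole argument is a two-line invocation of the fact that the SVD diagonalizes $A$ in an orthonormal basis on each side. The only place to be slightly careful is in writing $\Sigma - \Sigma_{r+1}$ correctly when $n \neq m$ (so $\Sigma$ is rectangular), but since both matrices share the same zero pattern outside their common diagonal, the subtraction still produces a rectangular diagonal matrix with a single nonzero entry, and the conclusion goes through unchanged.
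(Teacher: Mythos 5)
Your proof is correct and follows essentially the same route as the paper's: both reduce the claim to the diagonal factor via the orthogonal invariance of rank and spectral norm, observing that $\Sigma - \Sigma_{r+1}$ has a single nonzero entry $-\epsilon$. In fact your write-up is slightly more complete, since the paper asserts the rank-$(r+1)$ claim without justification while you explicitly count the nonzero diagonal entries.
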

\begin{proof}
Let the SVD decomposition of $A$ be $U \Sigma_r V^T$. Let us define the high-rank approximation of matrix $A$ as $\hat{A}$ = $U \Sigma_{r+1} V^T$, where $\Sigma_{r+1} = \text{diag} \{ \sigma_1,  \sigma_2, \cdots,  \sigma_r,\epsilon,0,\cdots,0 \}$ with rank $r+1$. It can also be seen that \( \hat{A}\) is $\epsilon$-close to the original matrix \( A\). Bounding $|| A - \hat{A}||_2$ gives:
$|| A - \hat{A}||_2 = ||U (\Sigma_r - \Sigma_{r+1}) V^T||_2    =\sigma_{r+1} = \epsilon$.
\end{proof}
Lemma \ref{lemma1} shows that matrix $A$ can be approximated by a high-rank matrix $\hat{A}$, by making perturbation to the $r+1^{th}$ entry in the $\Sigma$ matrix of the SVD decomposition of $A$, such that $||A-\hat{A}||_2 = \epsilon$.

\paragraph{Proof of Lemma \ref{angle-lemma}:}
\begin{lemma*}
\label{angle-lemma-app}
In Lemma \ref{lemma1}, $\cos(A, \hat{A}) > 0$.
\end{lemma*}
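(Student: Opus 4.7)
The plan is to interpret $\cos(A,\hat A)$ via the Frobenius inner product, i.e.\ $\cos(A,\hat A) = \langle A,\hat A\rangle_F / (\|A\|_F \|\hat A\|_F)$ with $\langle A,B\rangle_F = \operatorname{tr}(A^T B)$, and then reduce the computation to the diagonal $\Sigma$ matrices using the SVD factorization already fixed in Lemma~\ref{lemma1}.

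First I would substitute $A = U\Sigma_r V^T$ and $\hat A = U\Sigma_{r+1}V^T$ into the numerator. Using $U^T U = I$ and cyclicity of trace,
\begin{equation*}
\langle A,\hat A\rangle_F = \operatorname{tr}\bigl(V\Sigma_r U^T U \Sigma_{r+1} V^T\bigr) = \operatorname{tr}\bigl(\Sigma_r \Sigma_{r+1}\bigr).
\end{equation*}
Since $\Sigma_r = \operatorname{diag}\{\sigma_1,\dots,\sigma_r,0,\dots,0\}$ and $\Sigma_{r+1}$ agrees with $\Sigma_r$ on the first $r$ diagonal entries, their product is diagonal with entries $\sigma_i^2$ for $i\le r$ and zero elsewhere. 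Hence $\langle A,\hat A\rangle_F = \sum_{i=1}^r \sigma_i^2 = \|A\|_F^2$.

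Next I would handle the denominator. Because $A$ has rank $r\ge 1$, each of $\sigma_1,\dots,\sigma_r$ is strictly positive, so $\|A\|_F > 0$; similarly $\|\hat A\|_F^2 = \sum_{i=1}^r \sigma_i^2 + \epsilon^2 > 0$. Combining with the numerator yields the clean identity
\begin{equation*}
\cos(A,\hat A) = \frac{\|A\|_F^2}{\|A\|_F\,\|\hat A\|_F} = \frac{\|A\|_F}{\|\hat A\|_F} > 0,
\end{equation*}
which is the claim. (In the degenerate edge case $r=0$, the statement would have to be interpreted separately, but Lemma~\ref{lemma1} implicitly presumes $r\ge 1$ for the perturbation to meaningfully change the rank, and even then the ratio above is simply $0$, not negative, so no non-trivial obstacle arises.)

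There is no real obstacle here; the only point worth pinning down explicitly is the choice of inner product used to define $\cos(\cdot,\cdot)$ on matrices, since the lemma as stated does not specify it. I would note at the outset that the Frobenius inner product is the natural choice (it is the one induced by the $\operatorname{vec}$ map identifying $\mathbb{R}^{n\times m}$ with $\mathbb{R}^{nm}$), under which the above computation goes through verbatim. With that convention fixed, the rest is a two-line trace computation.
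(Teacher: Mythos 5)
Your proposal is correct and follows essentially the same route as the paper's own proof: both compute $\cos(A,\hat{A}) = \mathrm{trace}(A^T\hat{A})/(\|A\|_F\|\hat{A}\|_F)$, reduce the trace to $\sum_{i=1}^r \sigma_i^2$ via the shared SVD factors, and conclude $\cos(A,\hat{A}) = \|A\|_F/\|\hat{A}\|_F > 0$. Your explicit remarks on the choice of inner product and the $r \geq 1$ edge case are minor additions the paper leaves implicit, but the argument is the same.
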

\begin{proof}
Continuing with the same $A$ and $\hat{A}$ in Lemma \ref{lemma1}, consider the cosine of the angle between $A$ and $\hat{A}$: 
\begin{equation}
   \label{cos-mat-1}
   \cos\left(A, \hat{A}\right) = \frac{\mathrm{trace}(A^{T}\hat{A})}{||A||_{F}||\hat{A}||_{F}}
\end{equation}
Using the SVD of \(A\) and \(\hat{A}\):
\begin{gather*}
    \mathrm{trace}(A^{T}\hat{A}) = \mathrm{trace}(\Sigma_{A}^{T}\Sigma_{\hat{A}}) = \sum_{i=1}^r \sigma_{i}^{2} ;  \quad \text{Also} \quad
    ||A||_{F}^{2} = \sum_{i=1}^{r} \sigma_{i}^{2}; \qquad ||\hat{A}||_{F}^{2} = \sum_{i=1}^{r} \sigma_{i}^{2} + \epsilon^2
\end{gather*}
Using the above, we can show that:
\begin{equation}
\label{eq-angle-lemma}
    \cos(A, \hat{A}) = \frac{\sum_{i=1}^{r} \sigma^{2}_{i}}{\sqrt{\sum_{i=1}^{r} \sigma_{i}^{2}}\sqrt{\sum_{i=1}^{r} \sigma_{i}^{2} + \epsilon^2}} \\= \frac{\sqrt{\sum_{i=1}^{r} \sigma_{i}^{2}}}{\sqrt{\sum_{i=1}^{r} \sigma_{i}^{2} + \epsilon^2}} > 0
\end{equation}
\end{proof}

\paragraph{Proof of Lemma \ref{rank-lemma}:}
\begin{lemma*}
\label{rank-lemma-app}
Consider two matrices \(A\in \mathbb{R}^{m \times n}\), \(B \in \mathbb{R}^{n \times p}\), and a third matrix \(\hat{B}\in \mathbb{R}^{n \times p}\) such that \(r_{\hat{B}} = r_{B} + k\), $k \geq 0$. Then, given $r_B = n-k$, \(r_{A\hat{B}} \geq r_{AB}\).
\end{lemma*}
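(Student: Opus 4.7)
The plan is to observe that the two numerical hypotheses collapse into a single structural statement about $\hat{B}$, after which the claim is a one-line consequence of classical rank inequalities. Explicitly, from $r_B = n - k$ and $r_{\hat{B}} = r_B + k$ we obtain
\[
r_{\hat{B}} = (n-k) + k = n,
\]
so $\hat{B} \in \mathbb{R}^{n \times p}$ attains its maximum possible rank and therefore has full row rank (which implicitly requires $p \geq n$; otherwise no such $\hat{B}$ can exist, and I would flag this dimensional sanity check briefly in the final write-up).

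Next I would invoke Sylvester's rank inequality, $r_{XY} \geq r_X + r_Y - n$ where $n$ is the common inner dimension, applied with $X = A$, $Y = \hat{B}$. This gives
\[
r_{A\hat{B}} \;\geq\; r_A + r_{\hat{B}} - n \;=\; r_A + n - n \;=\; r_A.
\]
Combined with the universal submultiplicative bound $r_{A\hat{B}} \leq r_A$, this is in fact an equality $r_{A\hat{B}} = r_A$. On the other hand, the same submultiplicativity applied to $AB$ gives $r_{AB} \leq \min(r_A, r_B) \leq r_A$, so
\[
r_{AB} \;\leq\; r_A \;=\; r_{A\hat{B}},
\]
which is the desired conclusion.

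As a self-contained alternative that avoids citing Sylvester, one can argue more concretely: a matrix of full row rank admits a right inverse, so there exists $C \in \mathbb{R}^{p \times n}$ with $\hat{B}C = I_n$. Then $A = (A\hat{B})C$, whence $r_A \leq r_{A\hat{B}}$ by submultiplicativity, and the rest of the chain goes through verbatim. Either route reduces the proof to two lines once the key observation $r_{\hat{B}} = n$ is in hand.

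The main obstacle is not mathematical but rhetorical: the lemma statement is written as if the two rank identities on $B$ and $\hat{B}$ carry independent content, whereas in reality they collapse to the single constraint that $\hat{B}$ is full row rank, and the intermediate matrix $B$ plays no further role. I would therefore open the proof by making this collapse explicit, so the reader does not spend time searching for a more intricate argument involving $r_B$ directly.
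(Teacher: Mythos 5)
Your proof is correct, and it takes a genuinely more direct route than the paper's. The paper argues by contradiction: it assumes \(r_{A\hat{B}} < r_{AB}\), chains Sylvester's inequality with \(r_{AB} \leq \min\{r_A, r_B\}\) to get \(r_A + r_B - (n-k) < \min\{r_A, r_B\}\), and then splits into the cases \(r_A > r_B\) and \(r_A \leq r_B\), deriving a contradiction in each. Notably, the case \(r_A > r_B\) of the paper's argument leans on the additional hypothesis \(r_A \geq n-k\), which appears in the main-text statement of the lemma but was dropped from the appendix restatement you were given. Your observation that the hypotheses collapse to \(r_{\hat{B}} = n\) short-circuits all of this: Sylvester applied once to \(A\hat{B}\) gives \(r_{A\hat{B}} \geq r_A\), and \(r_{AB} \leq r_A\) finishes the job with no case analysis and no use of \(r_A \geq n-k\) at all. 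So your argument is not only shorter but strictly more general --- it proves the appendix version of the statement as written, and shows the extra hypothesis on \(r_A\) in the main-text version is superfluous for this conclusion. Your dimensional sanity check (\(r_{\hat{B}} = n\) forces \(p \geq n\), and otherwise the hypotheses are vacuous) is also a worthwhile remark that the paper omits.
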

\begin{proof}
Assume instead that \(r_{A\hat{B}} < r_{AB}\). Using Sylvester's rank inequality and product rank inequality, we get:
\begin{equation}
\label{main-1}
\begin{split}
r_{A} + r_{\hat{B}} - n \leq r_{A\hat{B}} < r_{AB} \leq \min\{r_{A}, r_{B}\}\\
\implies r_{A} + r_{B} - (n - k) < \min\{r_{A}, r_{B}\} \quad \text{since } r_{\hat{B}} = r_{B} + k
\end{split}
\end{equation}
Two cases now arise: (i) \(r_{A} > r_{B}\); and (ii) \(r_{A} \leq r_{B}\). Consider case (i) when \(r_{A} > r_{B}\). Eqn \ref{main-1} then becomes:
\begin{equation*}
r_{A} + r_{B} - (n - k) < r_{B} \implies r_{A} < n - k
\end{equation*}
This is a contradiction given $r_A \geq n-k$ in the claim, and thus, \(r_{A\hat{B}} \geq r_{AB}\). \\
Similarly, consider case (ii) when \(r_{A} \leq r_{B}\). Eqn \ref{main-1} then becomes: 
\begin{equation*}
r_{A} + r_{B} - (n - k) < r_{A} \implies r_{B} < n - k
\end{equation*}
This is once again a contradiction given $r_B = n-k$ in the claim, and thus, \(r_{A\hat{B}} \geq r_{AB}\).
\end{proof}

\paragraph{Proof of Lemma \ref{equi-lemma}:}
\begin{lemma*}
Under the assumption of property \ref{bvp}, with probability at least \(1 - \delta\), the following holds:
\begin{equation}
    \|\hat{g} - g\|_{2}    \leq {\Large \frac{(\sqrt{d\sigma^{2} + \gamma})}{\delta}}
\end{equation}
where \(g\) is the stochastic gradient (for a mini-batch) and \(\hat{g}\) is the perturbed full-batch gradient, given by $\hat{g} = G + \mathcal{N}(0, \sigma^2)$ (i.e. noise is sampled from zero-centered Gaussian with finite variance) and \(d\) is the dimension of $g$.
\end{lemma*}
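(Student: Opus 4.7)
The plan is to bound the second moment of $\hat{g}-g$ using Property \ref{bvp} together with the second moment of the injected Gaussian noise, and then convert that into a high-probability statement via Markov's inequality. I would start by writing $\hat{g} - g = (G - g) + \epsilon$ with $\epsilon \sim \mathcal{N}(0,\sigma^{2} I_{d})$ drawn independently of the mini-batch, and expanding
\begin{equation*}
\|\hat{g}-g\|_{2}^{2} = \|G-g\|_{2}^{2} + 2\langle G-g,\; \epsilon\rangle + \|\epsilon\|_{2}^{2}.
\end{equation*}

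Taking expectation over both the mini-batch draw and the noise, the cross-term vanishes because $\epsilon$ is zero-mean and independent of $G - g$. Property \ref{bvp} controls the first term by $\gamma$, and $\mathbb{E}\|\epsilon\|_{2}^{2} = d\sigma^{2}$ by the standard second-moment computation for an isotropic $d$-dimensional Gaussian. Hence $\mathbb{E}\,\|\hat{g}-g\|_{2}^{2} \leq d\sigma^{2} + \gamma$, and Jensen's inequality applied to the concave square root yields $\mathbb{E}\,\|\hat{g}-g\|_{2} \leq \sqrt{d\sigma^{2}+\gamma}$.

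Finally, I would apply Markov's inequality to the nonnegative random variable $\|\hat{g}-g\|_{2}$ at the threshold $t = \sqrt{d\sigma^{2}+\gamma}/\delta$:
\begin{equation*}
\mathbb{P}\!\left(\|\hat{g}-g\|_{2} \geq \frac{\sqrt{d\sigma^{2}+\gamma}}{\delta}\right) \leq \frac{\mathbb{E}\,\|\hat{g}-g\|_{2}}{\sqrt{d\sigma^{2}+\gamma}/\delta} \leq \delta,
\end{equation*}
and take complements to obtain the claimed bound with probability at least $1-\delta$.

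The argument is essentially a second-moment-plus-Markov calculation, so there is no deep technical obstacle; the only point worth flagging is the independence of $\epsilon$ from the mini-batch, which is what kills the cross-term and lets the second moment split cleanly into a ``gradient variance'' piece (bounded by $\gamma$ through Property \ref{bvp}) and a ``noise variance'' piece (equal to $d\sigma^{2}$). Under the natural reading of Algorithm \ref{alg1}, where the perturbation is drawn fresh at each iterate independently of the data, this independence holds by construction. One minor remark: applying Markov directly to $\|\hat{g}-g\|_{2}^{2}$ instead of $\|\hat{g}-g\|_{2}$ would yield the slightly tighter rate $\sqrt{d\sigma^{2}+\gamma}/\sqrt{\delta}$, but the route above reproduces the statement as written.
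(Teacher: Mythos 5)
Your proof is correct and follows essentially the same route as the paper's: bound $\mathbb{E}\|\hat{g}-g\|_{2}^{2}$ by $d\sigma^{2}+\gamma$ using the independence and zero mean of the injected noise together with Property \ref{bvp}, pass to the first moment via Jensen, and finish with Markov's inequality. The only cosmetic difference is that you expand $\|\hat{g}-g\|_{2}^{2}$ directly, whereas the paper reaches the same second-moment identity by starting from $\|\hat{g}-G\|_{2}^{2}$; your remark about the tighter $\sqrt{\delta}$ rate from applying Markov to the squared norm is also a fair observation.
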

\begin{proof}
As given in the statement:
\begin{equation}
    \hat{g} = G + \mathcal{N}(0, \sigma^2)
\end{equation}
 We begin by noting that \(\mathbb{E}[||\hat{g} - G ||_{2}^{2}] = d\sigma^{2}\), since the noise added to each dimension is independent and identical to the noise added to other dimensions. Note that:
\begin{equation}
\label{split-sum}
  ||\hat{g} - G ||_{2}^{2} = ||\hat{g} - g||_{2}^{2} + ||g - G||_{2}^{2} \\ + 2\langle(\hat{g} - g), (g- G )\rangle
\end{equation}
Using the linearity of the inner product and the fact that \(\hat{g} - g = -(g - G ) + Z\), where \(Z \sim \mathcal{N}(0, \sigma^2)\), it can be shown that:
\begin{equation}
    \label{dot-prod-1}
    \langle (\hat{g} - g), (g- G) \rangle = -||g - G||_{2}^{2} \\+ \langle Z, (g - G) \rangle
\end{equation}
Using Eqn \ref{dot-prod-1} in Eqn \ref{split-sum}, we get:
\begin{equation}
    ||\hat{g} - G||_{2}^{2} = ||\hat{g} - g||_{2}^{2} - ||g - G||_{2}^{2} \\+ 2 \langle Z, (g - G) \rangle
\end{equation}
This can be used in expectation due to linearity as:
\begin{equation}
   \mathbb{E}[||\hat{g} - G||_{2}^{2}] = \mathbb{E}[||\hat{g} - g||_{2}^{2}] - \mathbb{E}[||g - G||_{2}^{2}] \\+ 2 \mathbb{E}[\langle Z, (g - G) \rangle] \\= \mathbb{E}[||\hat{g} - g||_{2}^{2}] - \mathbb{E}[||g - G||_{2}^{2}]
\end{equation}
The second equality is due to the zero-centeredness of \(Z\). Now using Property \ref{bvp}, we get:
\begin{equation}
    \mathbb{E}[||\hat{g} - g||_{2}^{2}] = d\sigma^{2} + \mathbb{E}[||g - G||_{2}^{2}] \\ \leq d\sigma^{2} + \gamma
\end{equation}

By using the concavity of the square root, we get:
\begin{equation}
    \mathbb{E}[||\hat{g} - g||_{2}] \leq \sqrt{\mathbb{E}[||\hat{g} - g||_{2}^{2}]} = \sqrt{d\sigma^{2} + \gamma}
\end{equation}

Finally by Markov's inequality, we get:
\begin{equation}
    P\left(||\hat{g} - g||_{2} > \frac{\sqrt{d \sigma^{2} + \gamma}}{\delta}\right) \leq \delta \\
    \Rightarrow P\left(||\hat{g} - g||_{2} \leq \frac{\sqrt{d \sigma^{2} + \gamma}}{\delta}\right) \geq 1 - \delta
\end{equation}
which completes the proof.
\end{proof}

\paragraph{Proof of Lemma \ref{noiselemma}:}
Restating the definitions, \( \mathfrak{R}_\ell^i = W_{i+1}^T \cdots W_H^T \) and \( \mathfrak{R}_r^i = W_1^T \cdots W_{i-1}^T \) and \( \mathfrak{R} = W_H W_{H-1} \cdots W_1\).

\begin{lemma*}
A deep linear network trained with noise added to input, $X+\epsilon$, is equivalent to training with \textit{perturbed gradient descent} (Algorithm 1) where noise is a function of the weights and $\epsilon$ is given by $\varphi \left(W_{H}, W_{H-1}, \ldots, W_{1}, X;\epsilon \right) =  \mathfrak{R}_\ell^i [ \mathfrak{R} \epsilon X^T + \mathfrak{R} X \epsilon^T + R \epsilon \epsilon^T - Y\epsilon^T] \mathfrak{R}_r^i$.
\end{lemma*}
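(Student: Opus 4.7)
The plan is to compute the per-layer gradient of the perturbed loss directly, and then separate it into the clean (noise-free) gradient plus the claimed extra term $\varphi$. First I would substitute $X \mapsto X+\epsilon$ into the squared-error loss to obtain $\tilde{\mathcal{L}}(W) = \tfrac{1}{2}\|\mathfrak{R}(X+\epsilon) - Y\|_F^2$. Writing the product as $\mathfrak{R} = (W_H\cdots W_{i+1})\, W_i\, (W_{i-1}\cdots W_1)$ and using the standard chain-rule identity for a quadratic Frobenius loss of the factorized form $\|A W_i B - Y\|_F^2$, I would record the per-layer gradient
\begin{equation*}
\frac{\partial \tilde{\mathcal{L}}}{\partial W_i} \;=\; \mathfrak{R}_\ell^i\,\bigl(\mathfrak{R}(X+\epsilon) - Y\bigr)(X+\epsilon)^T\, \mathfrak{R}_r^i,
\end{equation*}
which specializes to $\mathfrak{R}_\ell^i(\mathfrak{R}X - Y)X^T \mathfrak{R}_r^i = \partial \mathcal{L}/\partial W_i$ when $\epsilon = 0$, confirming the baseline.

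Next I would expand the inner product by distributivity, keeping the clean term first and grouping the four ``impure'' terms together:
\begin{equation*}
\bigl(\mathfrak{R}(X+\epsilon)-Y\bigr)(X+\epsilon)^T \;=\; (\mathfrak{R}X-Y)X^T \;+\; \bigl[\mathfrak{R}\epsilon X^T + \mathfrak{R}X\epsilon^T + \mathfrak{R}\epsilon\epsilon^T - Y\epsilon^T\bigr].
\end{equation*}
After sandwiching by $\mathfrak{R}_\ell^i$ on the left and $\mathfrak{R}_r^i$ on the right, the first summand is exactly $\partial \mathcal{L}/\partial W_i$, and the bracketed remainder is precisely $\varphi(W_H,\ldots,W_1,X;\epsilon)$ as stated in the lemma (reading the $R$ in the statement as a typo for $\mathfrak{R}$). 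Hence the update rule for training with noisy input is $W_i \leftarrow W_i - \eta\bigl(\partial\mathcal{L}/\partial W_i + \varphi\bigr)$, which has exactly the shape of Algorithm \ref{alg1} with $\varphi$ playing the role of the injected noise.

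The computation is essentially bookkeeping, so there is no deep obstacle, but two points deserve care. First, $\epsilon$ enters in two places in the gradient: pre-multiplied by $\mathfrak{R}$ inside the residual and post-multiplied via $(X+\epsilon)^T$ on the other side; one must not accidentally absorb $\epsilon$ into $\mathfrak{R}_\ell^i$ or pull it through $\mathfrak{R}_r^i$, since these factors depend only on the weight matrices. Second, the ``equivalence'' here is at the level of the update direction only: the induced perturbation $\varphi$ is neither zero-mean nor isotropic Gaussian, so when subsequently invoking the rank-increase conclusion of Section~\ref{sec3} one is implicitly using that Lemma \ref{lemma1} and Corollary \ref{cor-rank-1} are insensitive to the precise distribution of the perturbation and only require it to be a small generic additive term; this is worth flagging, but the algebraic identification of $\varphi$ itself goes through cleanly.
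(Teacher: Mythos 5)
Your proposal is correct and follows essentially the same route as the paper's own proof: compute the closed-form per-layer gradient, substitute $X+\epsilon$, expand the product, and identify the leftover terms sandwiched between $\mathfrak{R}_\ell^i$ and $\mathfrak{R}_r^i$ as $\varphi$. Your added remarks --- that $R$ is a typo for $\mathfrak{R}$ and that the induced perturbation is weight-dependent rather than zero-mean Gaussian, so the downstream rank argument must not rely on the noise distribution --- are accurate and go slightly beyond what the paper states.
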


\begin{proof}
The closed form equation for the derivative of the loss function with respect to the weights of the linear network is given as:
\begin{equation}
\label{grad-app}
    \left.\frac{\partial{\mathrm{L}}}{\partial{W_i}}\right\rvert_{X} = W_{i+1}^T \cdots W_{H}^T(W_{H}W_{H-1}\cdots W_1X-Y) X^T W_{1}^T \cdots W_{i-1}^T
\end{equation}
$\forall i=1,\cdots, H$.
Then Eqn \ref{grad-app} becomes:
\begin{align*}
    \left.\frac{\partial{\mathrm{L}}}{\partial{W_i}}\right\rvert_{X} &= \mathfrak{R}_\ell^i (\mathfrak{R} X - Y ) X^T \mathfrak{R}_r^i \\
    &= \mathfrak{R}_\ell^i \mathfrak{R} XX^T \mathfrak{R}_r^i - \mathfrak{R}_\ell^i YX^T \mathfrak{R}_r^i
\end{align*}
When a small perturbation is added to the input given by  $\widetilde{X} = X+\epsilon$, the gradient w.r.t. \(W_i\) changes to:
\begin{gather*}
    \left.\frac{\partial{\mathrm{L}}}{\partial{W_i}}\right\rvert_{\widetilde{X}} = \mathfrak{R}_\ell^i \mathfrak{R} (X+\epsilon)(X+\epsilon)^T \mathfrak{R}_r^i - \mathfrak{R}_\ell^i Y(X+\epsilon)^T \mathfrak{R}_r^i \\
    = \mathfrak{R}_\ell^i \mathfrak{R}XX^T\mathfrak{R}_r^i + \mathfrak{R}_\ell^i \mathfrak{R} \epsilon X^T \mathfrak{R}_r^i + \mathfrak{R}_\ell^i \mathfrak{R} X \epsilon^T \mathfrak{R}_r^i + \mathfrak{R}_\ell^i \mathfrak{R} \epsilon \epsilon^T \mathfrak{R}_r^i - \mathfrak{R}_\ell^i YX \mathfrak{R}_r^i - \mathfrak{R}_\ell^i Y \epsilon^T \mathfrak{R}_r^i
\end{gather*}
Simplifying the left-hand-side will result in separate terms including the actual gradient \(\left.\frac{\partial{\mathrm{L}}}{\partial{W_i}}\right\rvert_{X}\) due to linearity. Thus $\left.\frac{\partial{\mathrm{L}}}{\partial{W_i}}\right\rvert_{\widetilde{X}} = \left.\frac{\partial{\mathrm{L}}}{\partial{W_i}}\right\rvert_{X} + \mathfrak{R}_\ell^i [ \mathfrak{R} \epsilon X^T + \mathfrak{R} X \epsilon^T + R \epsilon \epsilon^T - Y\epsilon^T] \mathfrak{R}_r^i$. This modified form of gradient is a perturbation to the actual gradient.
\end{proof}

\paragraph{Proof of Lemma \ref{dropout-lemma}:}
\begin{lemma*}
Let $\mathbb{E}[\epsilon \epsilon^T] =  \beta \mathbb{I}$, where $\mathbb{I}$ is the identity matrix, and $\beta > 0$. Then, the loss function of a linear neural network with Gaussian Dropout $G \sim \mathcal{N}(1,\sigma^{2})$ is the same as a network with loss function where $\epsilon$ is added to the input.
\end{lemma*}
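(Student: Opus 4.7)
:}
The plan is to recast Gaussian Dropout applied at the input as an equivalent additive input perturbation, and then to match expected losses. First I would exploit the affine reparametrization $G = \mathbf{1} + \tilde{G}$ with $\tilde{G} \sim \mathcal{N}(0, \sigma^{2})$ afforded by $G \sim \mathcal{N}(1, \sigma^{2})$. This gives $X \odot G = X + X \odot \tilde{G}$, so the forward pass can be written as $\mathfrak{R}(X + \epsilon)$ with induced additive perturbation $\epsilon := X \odot \tilde{G}$. At this point the Gaussian Dropout loss syntactically coincides with the input-noise loss already analyzed in Lemma \ref{noiselemma}, and the remaining task is to check that the two expected loss functionals agree.

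Next I would expand $\tfrac{1}{2}\mathbb{E}\|\mathfrak{R}(X + \epsilon) - Y\|_{F}^{2}$ into a data term, a cross term, and a quadratic term. The data term $\tfrac{1}{2}\|\mathfrak{R}X - Y\|_{F}^{2}$ is identical on both sides; the cross term vanishes in both settings because $\mathbb{E}[\epsilon] = 0$ (by the Gaussian decomposition in the dropout case, and by the zero-mean assumption implicit on the input-noise side). The equivalence therefore reduces to matching the quadratic regularizer $\tfrac{1}{2}\operatorname{tr}\bigl(\mathfrak{R}\, \mathbb{E}[\epsilon\epsilon^{T}]\, \mathfrak{R}^{T}\bigr)$, which under the hypothesis $\mathbb{E}[\epsilon\epsilon^{T}] = \beta \mathbb{I}$ collapses to $\tfrac{\beta}{2}\|\mathfrak{R}\|_{F}^{2}$ on both sides.

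The main obstacle is identifying the dropout variance $\sigma^{2}$ with the noise variance $\beta$ consistently. A direct Hadamard-product calculation gives $\mathbb{E}[(X\odot\tilde{G})(X\odot\tilde{G})^{T}] = \sigma^{2}\operatorname{diag}(XX^{T})$, which is only a scalar multiple of $\mathbb{I}$ when the rows of $X$ share a common squared norm $\rho$; in that case the assignment $\beta = \sigma^{2}\rho$ closes the proof. I would make this correspondence explicit in the argument (or equivalently interpret the hypothesis $\mathbb{E}[\epsilon\epsilon^{T}] = \beta\mathbb{I}$ as encoding the mild preprocessing that renders feature norms uniform), since this is the only non-mechanical step; the remainder is routine linearity and trace manipulation.
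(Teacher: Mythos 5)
Your proposal is internally coherent but takes a genuinely different route from the paper, and the two arguments end up proving slightly different equivalences. The paper places the Gaussian mask on the \emph{weights} of the second layer, computing $\mathbb{E}_{G}\bigl[\| Y - (W_2 \odot G) W_1 X \|_F^2\bigr] = \|Y - W_2W_1X\|_F^2 + \sigma^2\|W_2W_1X\|_F^2$, and models ``noise added to the input'' multiplicatively as $X(I+\epsilon)$, so that the input-noise side produces the matching regularizer $\beta\|W_2W_1X\|_F^2$ under $\mathbb{E}[\epsilon\epsilon^T]=\beta\mathbb{I}$; the equivalence is then a direct identification $\beta \leftrightarrow \sigma^2$ with no condition on the data. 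You instead place the mask on the input itself, reparametrize $G = \mathbf{1}+\tilde G$ to get an induced additive perturbation $\epsilon = X\odot\tilde G$, and compare against genuinely additive noise $X+\epsilon$, which yields the regularizer $\beta\|\mathfrak{R}\|_F^2$ rather than $\beta\|\mathfrak{R}X\|_F^2$. The shared skeleton (expand the quadratic, kill the cross term by zero mean, match the second-moment terms) is the same, and your reading of dropout is arguably more faithful to the paper's own definition $O_k = W_k(Z\odot B)$, which masks the layer input. What your route costs is the extra hypothesis you correctly identify: $\mathbb{E}[(X\odot\tilde G)(X\odot\tilde G)^T] = \sigma^2\operatorname{diag}(XX^T)$ is a multiple of the identity only when the rows of $X$ have a common squared norm $\rho$, forcing $\beta = \sigma^2\rho$. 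You flag this honestly, but it is a real additional assumption on the data that the paper's weight-side/multiplicative-noise formulation does not need; if you keep your formulation, that condition (or a feature-normalization preprocessing step) must be stated explicitly in the lemma rather than absorbed informally into the hypothesis $\mathbb{E}[\epsilon\epsilon^T]=\beta\mathbb{I}$.
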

\begin{proof}
Consider the squared loss function of a two layered linear network as:
\begin{equation} 
\label{loss}
    \| Y - W_2 W_1X \|_F^2 = \mathrm{tr}\lbrace( Y - W_2 W_1X)^T ( Y - W_2 W_1X)\rbrace
\end{equation} 
Denoting $\odot$ as element-wise multiplication, and applying Gaussian dropout at layer 2, Eqn \ref{loss} becomes:
\begin{equation}
    \| Y - (W_2 \odot G) W_1X \|_F^2 \\= \mathrm{tr}\lbrace(Y - (W_2 \odot G) W_1X)^T (Y - (W_2 \odot G) W_1X)\rbrace
\end{equation}
where $G \sim \mathcal{N}(1,\,\sigma^{2})$
. Defining $M = (W_2\odot G)^T (W_2\odot G)$, we get:
\begin{equation}
\mathbb{E}_{G}[\| Y - (W_2 \odot G) W_1X \|_F ^2] \\= \mathbb{E}_{G}[\mathrm{tr}\lbrace Y^T Y - 2Y^T(W_2 \odot G) W_1 X  + X^TW_1^TMW_1X\rbrace]
\end{equation}

Simplifying further and using the fact that \(\mathrm{tr}\lbrace\mathbb{E}_{G}[Y^T(W_2 \odot G)W_1 X]\rbrace = \mathrm{tr}\lbrace Y^{T}(W_2 W_1 X) \rbrace\):
\begin{equation}
\mathbb{E}_{G}[\| Y - (W_2 \odot G) W_1X \|_F ^2] \\= \mathrm{tr}\lbrace Y^T Y - 2Y^T(W_2 W_1 X)  + \mathbb{E}_{G}[X^TW_1^TMW_1X]\rbrace
\end{equation}

Consider \(\mathbb{E}_{G}[M]\), where \(M_{ij}\) is the \((i, j)^{th}\) entry of the matrix. 
\begin{equation}
M_{ij} = \sum_{k} g_{ki} g_{kj} M_{ki} M_{kj} \implies \mathbb{E}_{G}[M_{ij}] \\= \sum_{k} \mathbb{E}_{G}[g_{ki} g_{kj}] M_{ki} M_{kj}
\end{equation}
\begin{equation*}
\mathbb{E}[g_{ki} g_{kj}] = \begin{cases} \mathbb{E}[g_{ki}] \mathbb{E}[g_{kj}] = 1 & \text{if } i = j \\ \mathbb{E}[g^{2}_{ki}] = 1 + \sigma^{2} & \text{if } i \neq j \end{cases} \quad \implies \quad \mathbb{E}_{G}[M] = \left(\mathrm{diag}(\sigma^{2}) + \mathbbm{1}\right) \odot (W_{2}^{T}W_2)
\end{equation*}
where \(\mathbbm{1}\) denotes a matrix of 1s.

Now, it is easy to see that: 
\begin{align*} 
\mathrm{tr}\lbrace\mathbb{E}_{G} [X^{T}W_{1}^{T} M W_{1} X] \rbrace &=  \mathrm{tr}\lbrace X^{T}W_{1}^{T}W_{2}^{T}W_{2}W_{1}X\rbrace + \mathrm{tr}\lbrace X^{T}W_{1}^{T}\mathrm{diag}(\sigma^{2})W_{2}^{T}W_{2}W_{1}X \rbrace  \\ 
 &=  ||W_{2}W_{1}X||_{F}^{2} + \sigma^{2}||W_{2}W_{1}X||_{F}^{2}
\end{align*}

Hence,
\begin{equation}
\label{19}
\mathbb{E}_{G} [\| Y - (W_2 \odot G) W_1X \|_F ^2] = ||Y - W_{2} W_{1} X||_{F}^{2} + \sigma^{2} ||W_{2} W_{1} X||_{F}^{2}
\end{equation}

Let's now consider the same loss with noise $\epsilon \sim \mathcal{N}(0,\gamma)$ added to input:
\begin{equation}
\mathbb{E}_{\epsilon}[\| Y - W_2 W_1X(I+ \epsilon) \|_F^2] \\= \mathbb{E}_{\epsilon}[\| Y-W_2 W_1 X - W_2 W_1 X \epsilon\|_F^2]
\end{equation}

\begin{align} 
\mathbb{E}_{\epsilon}[\| Y - W_2 W_1X(I+ \epsilon) \|_F^2] &=  \mathbb{E}_{\epsilon}[||(Y - W_2 W_1 X)||_{F}^{2} + \mathbb{E}_{\epsilon}[||W_2 W_1 X \epsilon||_{F}^{2}] \\
 &  - 2\hspace{1mm}\mathrm{tr}\lbrace(Y - W_{2}W_{1}X)^{T}(W_{2}W_{1}X\epsilon)\rbrace]  
\end{align}

Since $\mathbb{E}[\epsilon] = 0$ and $\mathbb{E}[\mathrm{tr}(\cdot)] = \mathrm{tr}(\mathbb{E}[\cdot])$, we get:
\begin{align*} 
\mathbb{E}_{\epsilon}[\| Y - W_2 W_1X(I+ \epsilon) \|_F^2] &=  \mathbb{E}_{\epsilon}[||(Y - W_2 W_1 X)||_{F}^{2} + \mathbb{E}_{\epsilon}[||W_2 W_1 X \epsilon||_{F}^{2}] \\ 
 &=  ||Y - W_2 W_1 X||_{F}^{2} + \mathbb{E}_{\epsilon}[||W_2 W_1 X \epsilon||_{F}^{2}] \\
 &= \| Y - W_2W_1X \|_F^2 + \mathrm{tr}\{\mathbb{E}_{\epsilon}[\epsilon \epsilon^T X^T W_1^T W_2^T W_2W_1 X]\}
\end{align*}

Given $\mathbb{E}[\epsilon \epsilon^T] =  \beta \mathbb{I}$, we now have: 
\begin{align}
\label{21}
    E[\| Y - W_2 W_1 X(I+ \epsilon) \|_F^2] &= \| Y - W_2W_1X \|_F^2 + \beta \mathrm{tr}\{X^{T}W_{1}^{T}W_{2}^{T}W_{2}W_{1}X \}\\
    &= \| Y - W_2W_1X \|_F^2+ \beta \| W_2W_1 X\|_F^2
\end{align}
Thus, Dropout (Eqn \ref{19}) is equivalent to adding an appropriate noise in the input (Eqn \ref{21}).
\end{proof}
(The assumption of \( \mathbb{E}[\epsilon \epsilon^T] =  \beta \mathbb{I} \) in Lemma \ref{dropout-lemma} is motivated from \citep[Main Thm 13]{pmlr-v40-Ge15}).

\paragraph{More Experimental Results:}
\label{app-exp}
This section presents an extensive set of experimental results validation our theoretical findings. Figure \ref{sigmoid-result} presents the results of rank increment of individual and product matrix on a deep non-linear network. We used a network of \texttt{1000 $\times$ 500 $\times$ 250} with a sigmoid activation at the hidden layers. It can be seen from Figure \ref{sigmoid-result} that the product matrix reaches a full rank.

\begin{figure}[H]
  \centering
  \includegraphics[width=6cm, height=3cm]{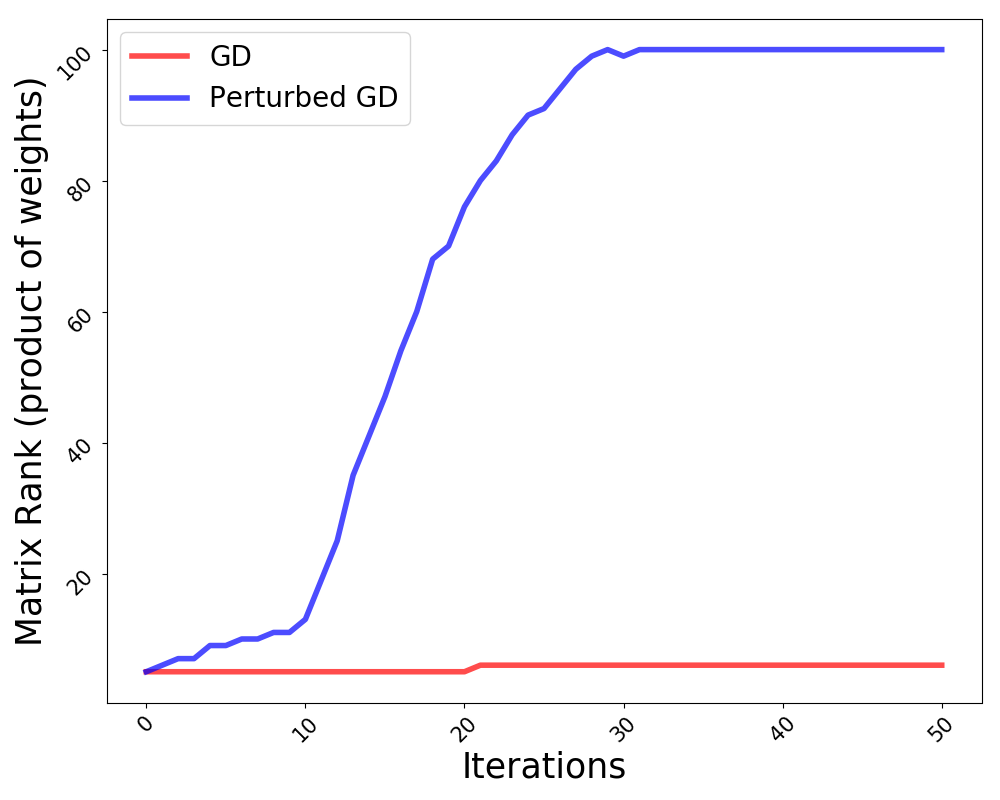}
  \caption{Rank of product of weight matrices for the aforementioned architecture. Note that the rank of the product reaches \(250\) which is the highest possible in this scenario (where the dimensions of the matrix are \(250 \times 1000\)}
\label{sigmoid-result}
\end{figure}

The same experiment is repeated with another non-linear activation function - \texttt{tanh} on a network architecture \texttt{900 $\times$ 500 $\times$ 100} with the non-linearity at the hidden and output layers, and the results are shown in Figure \ref{tanh-result}.

\begin{figure}[H]
  \centering
\includegraphics[width=6.5cm,height=4cm]{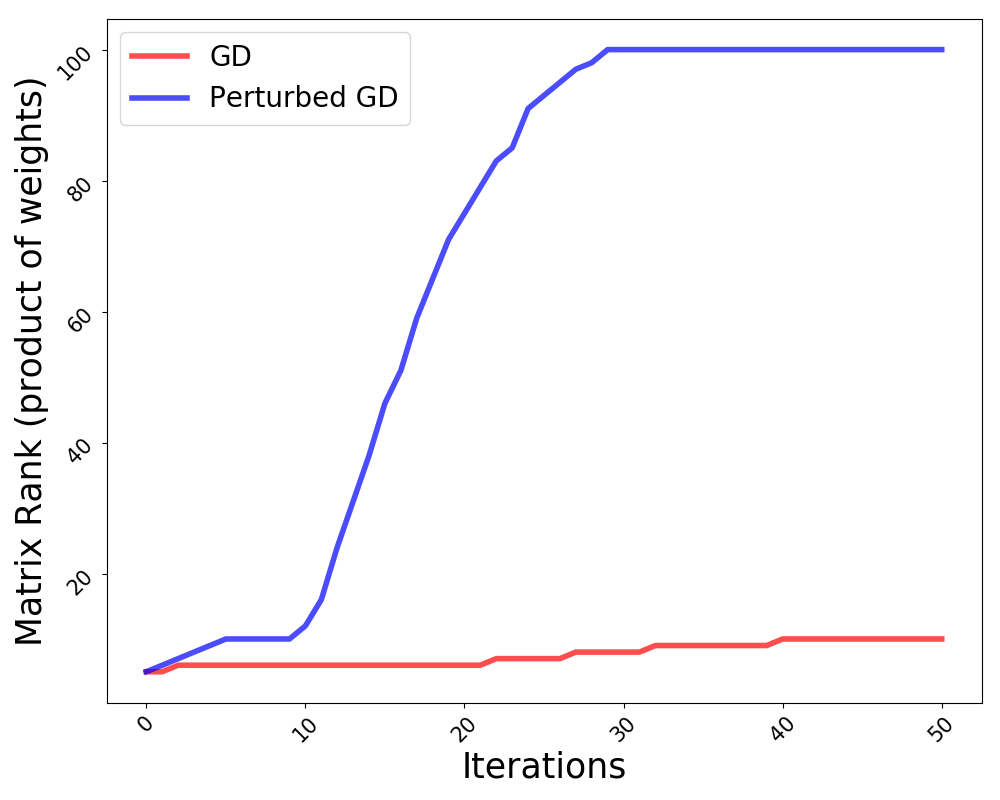}
  \caption{Rank of product of weight matrices for the aforementioned architecture. Note that the rank of the product reaches \(100\) which is the highest possible in this scenario (where the dimensions of the matrix are \(100 \times 900\)}
\label{tanh-result}
\end{figure}

The  result is also verified on linear networks with deeper architecture \((H=4,5)\) and the results are shown in Figure \ref{deep-result}. In all the experiments, it can be clearly seen that as the optimization task progresses, Alg \ref{alg1} results in the product of the weights approaching full rank. 
\begin{figure}[H]
  \centering
  \subfigure[Architecture:\newline{}\texttt{1000 x 700 x 500 x 200 x 100}]{\includegraphics[width=5cm, height=3cm]{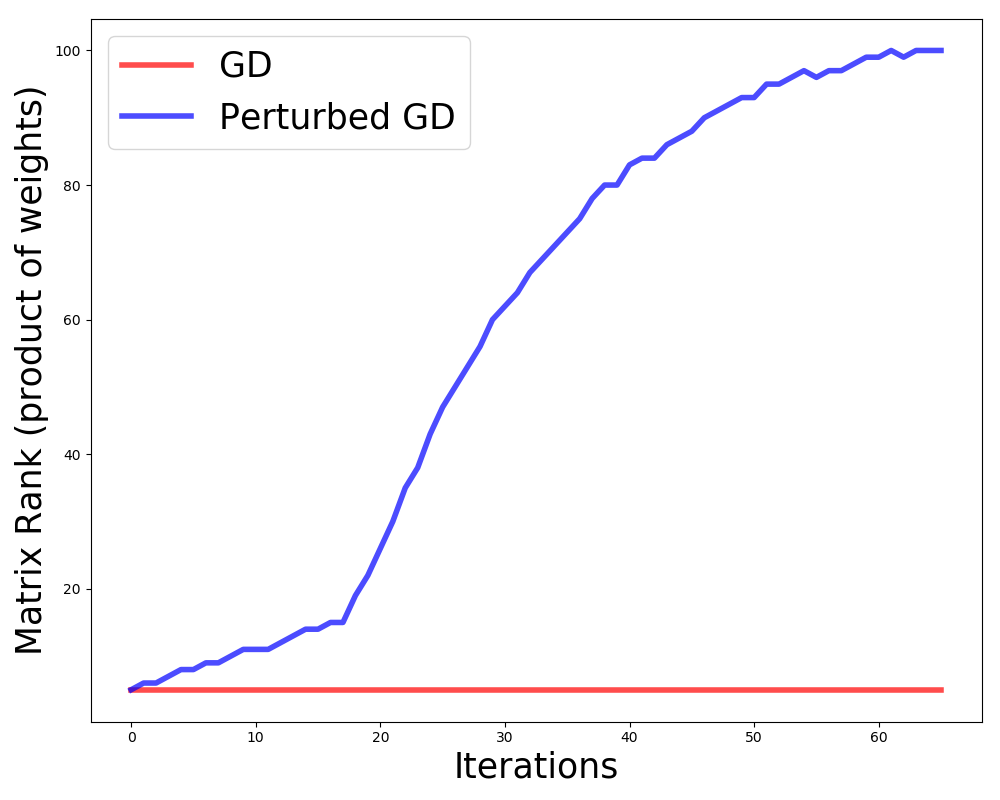}}\quad
  \subfigure[Architecture:\newline{}\texttt{1000 x 700 x 600 x 400 x 200 x 100}]{\includegraphics[width=5cm, height=3cm]{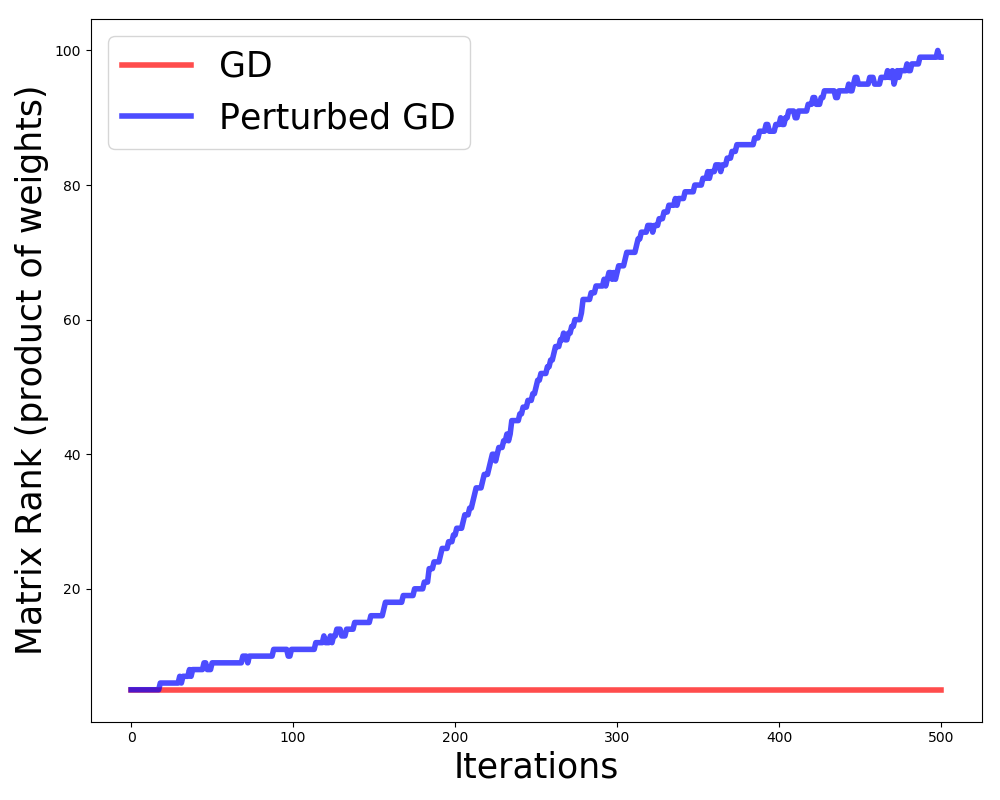}}
  \caption{Rank of product of weight matrices of deep linear networks}
\label{deep-result}
\end{figure}

\end{document}